\newcommand\xleftrightarrow[2][]{%
  \ext@arrow 9999{\longleftrightarrowfill@}{#1}{#2}}
\newcommand\longleftrightarrowfill@{%
  \arrowfill@\leftarrow\relbar\rightarrow}
\newcommand{\Hb}{{\mathbf H}}
\newcommand{\Ib}{{\mathbf I}}
\newcommand{\Wb}{{\mathbf W}}
\newcommand{\ab}{{\mathbf a}}
\newcommand{\xb}{{\mathbf x}}
\newcommand{\yb}{{\mathbf y}}
\newcommand{\Rd}{{\mathbb R}}
\DeclareMathOperator*{\prox}{prox}
\DeclareMathOperator*{\argmin}{arg\,min}
\DeclareMathOperator*{\eig}{eig}
\newtheorem{definition}{Definition}
\newtheorem{theorem}{Theorem}
\newcommand{\MONTH}{%
  \ifcase\the\month
  \or JANUARY% 1
  \or FEBRUARY% 2
  \or MARCH% 3
  \or APRIL% 4
  \or MAY% 5
  \or JUNE% 6
  \or JULY% 7
  \or AUGUST% 8
  \or SEPTEMBER% 9
  \or OCTOBER% 10
  \or NOVEMBER% 11
  \or DECEMBER% 12
  \fi}
\begin{document}
% paper title
% can use linebreaks \\ within to get better formatting as desired
% Do not put math or special symbols in the title.
\title{Deep Convolutional Neural Network for Inverse Problems in Imaging}

\author{Kyong~Hwan~Jin,~%\IEEEmembership{what membership?,~IEEE}
Michael~T.~McCann,~\IEEEmembership{Member,~IEEE},
Emmanuel~Froustey,
Michael~Unser,~\IEEEmembership{Fellow,~IEEE}
\thanks{K.H. Jin acknowledges the support from the ``EPFL Fellows'' fellowship program co-funded by Marie Curie from the European Union’s Horizon 2020 Framework Programme for Research and Innovation under grant agreement 665667.}
\thanks{K.H. Jin is with the Biomedical Imaging Group, EPFL, Lausanne, Switzerland (e-mail:kyonghwan.jin@gmail.com).}
\thanks{Michael McCann is with the Center for Biomedical Imaging, Signal Processing Core and the Biomedical Imaging Group, EPFL, Lausanne, Switzerland (e-mail:michael.mccann@epfl.ch).}
\thanks{E. Froustey is with Dassault Aviation, Saint-Cloud, France, previously with the Biomedical Imaging Group, EPFL, Lausanne, Switzerland. (e-mail:emmanuel.froustey@dassault-aviation.com).
}
\thanks{Michael Unser is with the Biomedical Imaging Group, EPFL, Lausanne, Switzerland (e-mail:michael.unser@epfl.ch).}%
}
%\thanks{M. Shell is with the Department
%of Electrical and Computer Engineering, Georgia Institute of Technology, Atlanta,
%GA, 30332 USA e-mail: (see http://www.michaelshell.org/contact.html).}% <-this % stops a space
%\thanks{J. Doe and J. Doe are with Anonymous University.}% <-this % stops a space
%\thanks{Manuscript received October 22, 2014;}}
%\thanks{This work was supported by Korea Science and Engineering Foundation under Grant 
%NRF-2009-0081089 and NRF-2013M3A9B2076548.}
%\thanks{Manuscript received \today;}}

% The paper headers
%\markboth{IEEE TRANSACTIONS ON IMAGE PROCESSING,~Vol.~, No.~, October~2014}%
%\markboth{IEEE TRANSACTIONS ON IMAGE PROCESSING,~Vol.~, No.~, \MONTH~\the\year}
\markboth{}%
{}

\maketitle
% \tableofcontents

\begin{abstract}
\baselineskip 0.17in
In this paper, we propose a novel deep convolutional neural network (CNN)-based algorithm for solving ill-posed inverse problems.
Regularized iterative algorithms have emerged as the standard approach to ill-posed inverse problems in the past few decades.
These methods produce excellent results, but can be challenging to deploy in practice due to factors including the high computational cost of the forward and adjoint operators and the difficulty of hyper parameter selection.
%Most recent algorithms rely on sparsity constraints, but several problems, including model incoherence, 
The starting point of our work is the observation that unrolled iterative methods have the form of a CNN (filtering followed by point-wise non-linearity) when the normal operator ($H^*H$, the adjoint of $H$ times $H$) of the forward model is a convolution.
Based on this observation, we propose using direct inversion followed by a CNN to solve normal-convolutional inverse problems.
The direct inversion encapsulates the physical model of the system, but leads to artifacts when the problem is ill-posed;
the CNN combines multiresolution decomposition and residual learning in order to learn to remove these artifacts while preserving image structure.
 We demonstrate the performance of the proposed network in sparse-view reconstruction (down to 50 views) on parallel beam X-ray computed tomography in synthetic phantoms as well as in real experimental sinograms.
 The proposed network outperforms total variation-regularized iterative reconstruction for the more realistic phantoms and requires less than a second to reconstruct a $512\times512$ image on the GPU.
\end{abstract}

% Note that keywords are not normally used for peerreview papers.
% \begin{IEEEkeywords}
% Deep learning, Convolutional neural network (CNN), Inverse problem, Normal operator, Sparse-view computed tomography, Magnetic resonance imaging, Diffraction tomography
% \end{IEEEkeywords}

% For peer review papers, you can put extra information on the cover
% page as needed:
% \ifCLASSOPTIONpeerreview
% \begin{center} \bfseries EDICS Category: 3-BBND \end{center}
% \fi
%
% For peerreview papers, this IEEEtran command inserts a page break and
% creates the second title. It will be ignored for other modes.
\IEEEpeerreviewmaketitle

%\newpage
\section{Introduction}
%\hl{I would add a short motivating sentence on inverse problems here.} 
\PARstart{O}{ver} the past decades, iterative reconstruction methods have become the dominant approach to solving inverse problems in imaging including denoising \cite{rudin1992nonlinear,chambolle2004algorithm,chang2000adaptive,luisier2007new}, deconvolution \cite{chan1998total,krishnan2009fast}, and interpolation \cite{thevenaz2000interpolation,bertalmio2000image}.
Thanks to robust regularizers such as total variation \cite{rudin1992nonlinear,chan1998total,chambolle2004algorithm} and sparsity \cite{tibshirani1996regression}, practical algorithms have appeared with excellent image quality and reasonable computational complexity.
These advances have been particularly influential in the field of biomedical imaging, e.g., in  magnetic resonance imaging (MRI)~\cite{jung_k-t_2009,lingala_accelerated_2011} and X-ray computed tomography (CT)~\cite{mcgaffin_alternating_2015, McCann:16}. These devices face an unfavorable trade-off between noise and acquisition time. Short acquisitions lead to severe degradations of image quality, while long acquisitions may cause motion artifacts, patient discomfort, or even patient harm in the case of radiation-based modalities.
Iterative reconstruction with regularization provides a way to mitigate these problems in software, i.e. without developing new scanners.
%To overcome these problems, the mentioned mathematical algorithms based on iterative optimization have been extensively investigated, and such software solutions make image restorations being simple without requiring additional hardware modules.
With the appearance of compressed sensing \cite{candes2006robust}, our theoretical understanding of these approaches evolved further, and, in some applications, remarkable outcomes appeared with stable reproducibility \cite{lustig2007sparse,sidky2008image}.

%\hl{I would smooth the transition here (maybe with some limitations of the iterative frameworks?).} 
A more recent trend is deep learning~\cite{lecun2015deep}, which has arisen as a promising framework providing state-of-the-art performance for image classification \cite{krizhevsky2012imagenet,russakovsky2015imagenet} and segmentation \cite{girshick2014rich,long2015fully,ronneberger2015u}. Moreover, regression-type neural networks demonstrated impressive results on inverse problems with exact models such as signal denoising\cite{burger2012image,xie2012image}, deconvolution\cite{xu2014deep}, and interpolation \cite{dong2016image,kim2015accurate}.
Central to this resurgence of neural networks has been the convolutional neural network (CNN) architecture.
Whereas the classic multilayer perceptron consists of layers that can perform arbitrary matrix multiplications on their input, the layers of a CNN are restricted to perform convolutions, greatly reducing the number of parameters which must be learned.

Researchers have begun to investigate the link between conventional approaches and deep learning networks \cite{gregor2010learning,chen2015learning,xin2016maximal,lin2016does}. Gregor and LeCun~\cite{gregor2010learning} explored the similarity between the ISTA algorithm~\cite{daubechies2004iterative} and a shared layerwise neural network and demonstrated that several layer-wise neural networks act as a fast approximated sparse coder. In \cite{chen2015learning}, a nonlinear diffusion reaction process based on the Perona-Malik process was proposed using deep convolutional learning; convolutional filters from diffusion terms were trained instead of using well-chosen filters like kernels for diffusion gradients, while the reaction terms were matched to the gradients of a data fidelity term. In \cite{xin2016maximal}, the authors focused on the relationship between $l_0$ penalized-least-squares methods and deep neural networks. In the context of a clustered dictionary  model, they found that the non-shared layer-wise independent weights and activations of a deep neural network provide more performance gain than the layer-wise fixed parameters of an unfolded $l_0$ iterative hard thresholding method. The quantitative analysis relied on the restricted isometry property (RIP) condition from compressed sensing \cite{candes2006robust}. Others have investigated learning optimal shrinkage operators for deep-layered neural networks \cite{kamilov2016learning,schmidt2014shrinkage}. %On the other hand, Lin and Tegmark \cite{lin2016does}, the physical quantities about Hamiltonian operator are chosen as a main factor of analysis for approximating availability of deep learning. 

Despite these works, practical and theoretical questions remain regarding the link between iterative reconstruction and CNNs.
For example,
in which problems can CNNs outperform traditional iterative reconstructions, and why? Where does this performance come from, and can the same gains be realized by learning aspects of the iterative process (e.g. the shrinkage)?
Although \cite{chen2015learning} began to address this connection, they only assumed that the filters learned in the Perona-Malik scheme are modified gradient kernels, with performance gains coming from the increased size of the filters.

In this paper, we explore the relationship between CNNs and iterative optimization methods for one specific class of inverse problems: those where the normal operator associated with the forward model ($H^*H$, where $H$ is the forward operator and $H^*$ is the adjoint operator) is a convolution. The class trivially includes denoising and deconvolution, but also includes MRI \cite{pruessmann1999sense}, X-ray CT \cite{sidky2008image,McCann:16}, and diffraction tomography (DT). Based on this connection, we propose a method for solving these inverse problems by combining a fast, approximate solver with a CNN. We demonstrate the approach on low-view CT reconstruction, using filtered back projection (FBP) and a CNN that makes use of residual learning \cite{kim2015accurate} and multilevel learning \cite{ronneberger2015u}. We use high-view FBP reconstructions for training, meaning that training is possible from real data (without oracle knowledge). We compare to a state-of-the art regularized iterative reconstruction and show promising results on both synthetic and real CT data. Especially, reconstructed images from the proposed network represented well complex textures which are important to diagnosis.%\hl{I would detail a bit more the obtained results to motivate further reading.} 

\section{Inverse Problems with Shift-Invariant Normal Operators}
\label{sec:inv}

We begin our discussion by describing the class of inverse problems for which the normal operator is a convolution.
As we will show, solving problems of this form iteratively requires repeated convolutions and point-wise nonlinearities, which suggests that CNNs may offer an alternative solution.
The class is broad, encompassing at least denoising, deconvolution, and reconstruction of MRI, CT, and diffraction tomography images.
The underlying convolutional structure is known for MRI and CT and has been exploited in the past for the design of fast algorithms (e.g. \cite{McCann:16}).
Here, we aim to give a general and concise definition to motivate our method.
We go on to discuss the standard direct and iterative approaches to solving these problems. %\hl{I am not sure that this introduction of section is needed here. If you keep one, maybe you should include a short mathematical definition of what is an inverse problem}.  --> valuable concerns. I'll find a way.

\subsection{Theory}
For the continuous case, let $H : L_2(\mathbb{R}^{d_1}) \to L_2(\Omega)$ be a linear operator and $H^*$ denote its adjoint, where $L_2(\Omega) = \{f : \Omega \to \mathbb{C} \mid \int_\Omega |f(x)|^2 dx < +\infty \}$.
The range, $\Omega \subseteq \mathbb{R}^{d_2}$, remains general to include operators such as the X-ray transform, where the measurements are defined on a circular/spherical domain.
The following definitions give the building blocks of a normal shift-invariant operator. 

\begin{definition}[Multiplication]
\label{def:mult}
A \emph{multiplication}, $M_m : L_2(\Omega ) \to L_2(\Omega )$, is a linear operator such that $M_m \{f\}(x) = m(x) f(x)$ with $m \in L_2(\Omega)$ for some continuous, bounded function, $m: \Omega \to \mathbb{C}$.
\end{definition}

\begin{definition}[Convolution]
\label{def:conv}
A \emph{convolution}, $H_h: L_2(\Omega ) \to L_2(\Omega )$, is a linear operator such that $H_h f = \mathcal{F}^* M_{\hat{h}} \mathcal{F}f$, where $\mathcal{F}$ is the Fourier transform, $\hat{h}$ is the Fourier transform of $h$, and $M_{\hat{h}}$ is a multiplication.
\end{definition}

\begin{definition}[Reversible change of variables]
\label{def:cofv}
A \emph{reversible change of variables}, $\Phi_\varphi: L_2(\Omega_1 ) \to L_2(\Omega_2)$, is a linear operator such that $\Phi_\varphi f = f(\varphi(\cdot))$ for some $\phi:\Omega_2 \to \Omega_1 $ and such that its inverse, $\Phi_\varphi^{-1} = \Phi_{\varphi^{-1}}$ exists.
\end{definition}

If $H_h$ is a convolution, then $H_h^*H_h$ is as well (because $\mathcal{F}^* M_{\hat{h}}^* \mathcal{F} \mathcal{F}^* M_{\hat{h}} \mathcal{F} = \mathcal{F}^* M_{|\hat{h}|^2} \mathcal{F}$), but this is true for a wider set of operators.
Theorem~\ref{thm:cont} describes this set.

\begin{theorem}[Normal-convolutional operators]
\label{thm:cont}
If there exists an isometry, $T$, a multiplication, $M_m$, and a change of variables, $\Phi_\varphi$, such that $H = T M_m \Phi_{\varphi}^{-1} \mathcal{F}$, then $H^*H$ is a convolution with $\hat{h} = |\det J_\varphi| M_{\Phi_\varphi |m|^2} $, where $J_\varphi$ is the Jacobian matrix of $\varphi$ and $M_{\Phi_\varphi |m|^2}$ is a suitable multiplication.
\end{theorem}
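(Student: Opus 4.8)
The plan is to compute $H^*H$ directly by substituting the factorization $H = T M_m \Phi_{\varphi}^{-1} \mathcal{F}$ and simplifying the composition until it collapses into the form $\mathcal{F}^* M_{\hat h} \mathcal{F}$ demanded by Definition~\ref{def:conv}. First I would take adjoints factor by factor, using $(ABC)^* = C^* B^* A^*$, to obtain $H^* = \mathcal{F}^* (\Phi_{\varphi}^{-1})^* M_m^* T^*$. Multiplying by $H$ and invoking the defining property of an isometry, $T^*T = I$, immediately eliminates the outer isometry and leaves
\[
H^*H = \mathcal{F}^* (\Phi_{\varphi}^{-1})^* M_m^* M_m \Phi_{\varphi}^{-1} \mathcal{F}.
\]
Since $M_m^* M_m$ is pointwise multiplication by $\overline{m}\,m = |m|^2$, the problem reduces to showing that the inner operator $(\Phi_{\varphi}^{-1})^* M_{|m|^2} \Phi_{\varphi}^{-1}$ is itself a multiplication; conjugating any multiplication by $\mathcal{F}$ and $\mathcal{F}^*$ then produces a convolution by the very definition.

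The core of the argument is the change-of-variables bookkeeping. I would first establish the adjoint formula for a reversible change of variables: starting from $\langle \Phi_\varphi f, g\rangle = \int f(\varphi(x))\,\overline{g(x)}\,dx$ and substituting $y = \varphi(x)$ so that $dx = |\det J_{\varphi^{-1}}(y)|\,dy$, one reads off $\Phi_\varphi^* = M_{|\det J_{\varphi^{-1}}|}\, \Phi_{\varphi^{-1}}$ --- the adjoint of a change of variables is the inverse change of variables weighted by the Jacobian of that inverse. Applying this with $\varphi$ replaced by $\varphi^{-1}$ gives $(\Phi_{\varphi}^{-1})^* = (\Phi_{\varphi^{-1}})^* = M_{|\det J_\varphi|}\,\Phi_\varphi$. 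Substituting this and pushing the multiplication through the change of variables via the elementary identity $\Phi_\varphi M_a \Phi_\varphi^{-1} = M_{\Phi_\varphi a}$ --- which says that pre- and post-composing a pointwise multiplier by a diffeomorphism simply reparametrizes the multiplier --- collapses the inner operator to
\[
(\Phi_{\varphi}^{-1})^* M_{|m|^2} \Phi_{\varphi}^{-1} = M_{|\det J_\varphi|}\, M_{\Phi_\varphi |m|^2} = M_{|\det J_\varphi|\,\Phi_\varphi |m|^2}.
\]
This is exactly the multiplication with symbol $\hat h = |\det J_\varphi|\,\Phi_\varphi |m|^2$, i.e. $M_{\hat h} = |\det J_\varphi|\,M_{\Phi_\varphi|m|^2}$, matching the statement; feeding it back gives $H^*H = \mathcal{F}^* M_{\hat h}\mathcal{F}$, which is a convolution $H_h$ by Definition~\ref{def:conv}.

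The main obstacle I anticipate is not conceptual but a matter of keeping the change-of-variables conventions consistent: one must track which direction ($\varphi$ versus $\varphi^{-1}$) carries the Jacobian, ensure the \emph{absolute value} of the determinant appears (the Lebesgue measure transforms by $|\det J|$, never the signed determinant), and confirm that $\varphi^{-1}\circ\varphi = \mathrm{id}$ cancels correctly inside $\Phi_\varphi M_{|m|^2}\Phi_{\varphi}^{-1}$. Assuming $\varphi$ is a genuine diffeomorphism with nonvanishing Jacobian (implicit in Definition~\ref{def:cofv}) and that $m$ is bounded so that every factor acts boundedly on $L_2$, these verifications are routine, and the identity $H^*H = \mathcal{F}^* M_{\hat h}\mathcal{F}$ completes the proof.
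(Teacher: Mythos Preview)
Your proposal is correct and follows exactly the same route as the paper: factor $H^*H$, kill $T^*T$ by isometry, collapse $M_m^*M_m$ to $M_{|m|^2}$, and then reduce $(\Phi_\varphi^{-1})^* M_{|m|^2}\Phi_\varphi^{-1}$ to a multiplication via change of variables. The only difference is granularity: the paper compresses your adjoint computation $(\Phi_\varphi^{-1})^* = M_{|\det J_\varphi|}\Phi_\varphi$ and the conjugation identity $\Phi_\varphi M_a \Phi_\varphi^{-1} = M_{\Phi_\varphi a}$ into a single unjustified step labeled ``(b) follows from the definition of a reversible change of variables,'' whereas you spell both out explicitly.
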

\begin{proof}
Given an operator, $H$, that satisfies the conditions of Theorem~\ref{thm:cont}, 
\begin{align}
    H^*H 
        &= \mathcal{F}^* (\Phi_{\varphi}^{-1})^* M_m^* T^* T M_m \Phi_{\varphi}^{-1} \mathcal{F}\\
        &\overset{(a)}{=} \mathcal{F}^* (\Phi_\varphi^{-1})^* M_{|m|^2} \Phi_\varphi^{-1} \mathcal{F}\\
        &\overset{(b)}{=} \mathcal{F}^* |\det J_\varphi| M_{\Phi_\varphi |m|^2}   \mathcal{F}\label{eq:thm3}
\end{align}
 %because of change of variables in the inner product integrals, $\Phi_\varphi^* = \Phi_\varphi^{-1} |\det J_{\varphi^{-1}}|$.
 %
where (a) follows from the definitions of isometry and multiplication
and (b) follows from the definition of a reversible change of variables.
Thus, $H^*H$ is a convolution by Definition~\ref{def:conv}.
\end{proof}

A version of Theorem~\ref{thm:cont} also holds in the discrete case; we sketch the result here.
Starting with a continuous-domain operator, $H_c$, that satisfies the conditions of Theorem~\ref{thm:cont}, we form a discrete-domain operator, $H_d : l_2(\mathbb{Z}^{d_0}) \to l_2(\mathbb{Z}^{d_1}), H = S H_c Q$, where $S$ and $Q$ are sampling and interpolation, respectively.
Then, assuming that $H_c Q f$ is bandlimited, $H_d^*H_d$ is a convolution.
% note: not using bold for $H_d$ because it operates on sequences, i.e. it is not a finite-dimension matrix

%\subsection{Examples}
%\hl{I would either keep the following paragraphs as an example subsection, or include it in the appendices (as this is the \textit{Theory} section)}. --> I think this is too relevant with section of Direct inversion, so it is good to be here.
For example, consider the continuous 2D X-ray transform, $R : L_2(\mathbb{R}^2) \to L_2([0, \pi) \times \mathbb{R})$, which measures every line integral of a function of 2D space, indexed by the slope and intercept of the line.
Using the Fourier central slice theorem~\cite{kak_principles_2001}, 
\begin{equation}
    R = T \Phi_{\varphi}^{-1} \mathcal{F},
\end{equation}
where $\Phi_{\varphi}$ changes from Cartesian to polar coordinates (i.e.  $\varphi^{-1}(\theta, r) =(r \cos \theta, r \sin \theta)$) and $T$ is the inverse Fourier transform with respect to $r$ (which is an isometry due to Parseval's theorem).
This maps a function, $f$, of space, $\bm{x}$, to its Fourier transform, $\hat{f}$, which is a function of frequency, $\bm{\omega}$.
Then, it performs a change of variables, giving $\hat{f}_{\text{polar}}$, which is a function of a polar frequency variables, $(\theta, r)$.
Finally, $T$ inverts the Fourier transform along $r$, resulting in a sinogram that is a function of $\theta$ and a polar space variable, $y$.
Theorem~\ref{thm:cont} states that $R^*R$ is a convolution with $\hat{h}(\bm{\omega}) = |\det J_\varphi(\bm{\omega})| = 1/\|\bm{\omega}\|$, where, again, $\bm{\omega}$ is the frequency variable associated with the 2D Fourier transform, $\mathcal{F}$.

\subsection{Direct Inversion}\label{sec:direct_inv}
Given a normal-convolutional operator, $H$, the inverse (or reconstruction) problem is to recover an image $f$ from its measurements $g = Hf$.
The theory presented above suggests two methods of direct solutions to this problem.
The first is to apply the inverse of the filter corresponding to $H^*H$ to the back projected measurements, 
\begin{equation*}
    f =  W_h H^* g,
\end{equation*}
where $W_h$ is a convolution operator with $\hat{h}(\bm{\omega}) =  1 / (\lvert \det J_\varphi \rvert  \Phi_\varphi   \lvert m(\bm{\omega})\rvert^2)$.
This is exactly equivalent to performing a deconvolution in the reconstruction space.
The second is to invert the action of $H$ in the measurement domain before back projecting,
\begin{equation*}
f = H^* T M_h T^*g,
%\label{eq:direct2}
\end{equation*}
where $M_h$ is a multiplication operator with $h(\bm{\omega}) = 1/ (|\det J_{\varphi}| |m(\bm{\omega})|^2)$.
If $T$ is a Fourier transform, then this inverse is a filtering operation followed by a back projection;
if $T$ is not, the operation remains filtering-like in the sense that it is diagonalizable in the transform domain associated with $T$.
Note also that if $T$ is not a Fourier transform, then the variable $\bm{\omega}$  no longer refers to frequency.
Given the their filter-like form, we refer to these direct inverses as filtered back projection (FBP)~\cite{kak_principles_2001}, a term borrowed from X-ray CT reconstruction.

Returning to the example of the continuous 2D X-ray transform, the first method would be to back project the measurements and then apply the filter with a 2D Fourier transform given by $\|\bm{\omega}\|$.
The second approach would be to apply the filter with 1D Fourier transform given by $\omega$ to each angular measurement and then back project the result.
In the continuous case, the methods are equivalent, but, in practice, the measurements are discrete and applying these involves some approximation.
Then, which form is used affects the accuracy of the reconstruction (along with the runtime).
This type of error can be mitigated by formulating the FBP to explicitly include the effects of sampling and interpolation (e.g., as in \cite{horbelt_discretization_2002}).
The larger problem is that the filter greatly amplifies noise, thus in practice some amount of smoothing is also applied.

% In MRI and DT, the direct inverse has just inverse Fourier relation followed with masking and forward Fourier transform.

% In this section we will explain a convolutional property in normal operator($H^*H$) of specific inverse problems for imaging applications including magnetic resonance imaging (MRI), computed tomography (CT), diffraction tomography. The three imaging devices such as MRI, CT, diffraction tomography, acquire measurements related with Fourier transform of target object: in case of MRI, 
% signals are acquired as Fourier coefficients with user-defined trajectory. For CT, although measurements have a form of projected lines or images of object, the projection data have a direct inverse Fourier transform of radial lines or planes of object's Fourier space. For diffraction tomography, the basic measurements are acquired in same manner with MRI, however, they had a missing region caused by physical limitations.
% The Fourier related acquisition is easily extended to deconvolution problems. Let us start with the deconvolution model as penalized inverse problem,

\subsection{Iterative Inversion}\label{sec:iter_inv}
%\hl{I would move this sentence at the end of the previous section}. 
In practice, inverse problems related with imaging are often ill-posed, which prohibits the use of direct inversion because measurement noise causes serve perturbations in the solution. Adding regularization (e.g., total variation \cite{chambolle2004algorithm} or $l_1$ sparsity as in LASSO \cite{tibshirani1996regression}) overcomes this problem.
We now adopt the discrete, finite-support notation where the forward model is a matrix, $\Hb \in \mathbb{R}^{N_\yb \times N_\xb}$ and the measurements are a vector, $\yb \in \mathbb{R}^{N_\yb}$.
The typical synthesis form of the inverse problem is then
\begin{eqnarray}\label{eq:cost2}
	\argmin_\ab \|\yb-\Hb \Wb \ab \|^2_2+\lambda\|\ab\|_1,
\end{eqnarray}
where $\ab \in \mathbb{R}^{N_\ab}$ is the vector of transform coefficients of the reconstruction such that $\xb = \Wb \ab$ is the desired reconstruction and where $\Wb\in \Rd^{N_\xb \times N_a}$ is a transform so that $\ab$ is sparse.
For example, if $\Wb$ is a multichannel wavelet transform $\Wb=\begin{bmatrix} \Wb_1 & \Wb_2 & \cdots & \Wb_c \end{bmatrix}$\cite{da2006nonsubsampled,mallat1999wavelet}, then the formulation promotes the wavelet-domain sparsity of the solution.
And, for many such transforms, $\Wb$ will be shift-invariant (a set of convolutions).

This formulation does not admit a closed form solution, and, therefore, is typically solved iteratively.
For example, the popular ISTA~\cite{daubechies2004iterative,gregor2010learning} algorithm solves Eq. \eqref{eq:cost2} with the iterate
\begin{eqnarray}
\ab^{k+1} = \prox_{l_1}(\ab^{k},\lambda) &\triangleq&% %\argmin_{\ab}\left( \| \ab - (\ab^{k} - \frac{1}{L}\nabla \Hb \Wb) \yb - \Hb\Wb\ab \|_2^2 + \lambda\|\ab\|_1 \right)\\
%\left( \xb_k -  \frac{1}{L}( (\Wb^* H^* H \Wb)\xb_k - \Psi^* H^* \yb ); \alpha/L \right)\\
\mathcal{S}_{\lambda/L}\left( \frac{1}{L} \Wb^* \Hb^* \yb +(\Ib-\frac{1}{L} \Wb^* \Hb^* \Hb\Wb) \ab^{k} \right)\label{eq:prox3}
% &=&\mathcal{S}_{\lambda/L}\left( \frac{1}{L} \widehat{\Hb}^* \yb +(\Ib-\frac{1}{L} \widehat{\Hb}^* \widehat{\Hb}) \xb \right)\label{eq:prox3}
\end{eqnarray}
where $\mathcal{S}_{\theta}$ is the soft-thresholding operator by value $\theta$ and $L\leq\eig(\Wb^*\Hb^*\Hb\Wb)$ is the Lipschitz constant of a normal operator.
When the forward model is normal-convolutional and when $\Wb$ is a convolution, the algorithm consists of iteratively filtering by $\Ib-(1/L) \Wb^*\Hb^*\Hb\Wb$, adding a bias, $(1/L) \Wb^*\Hb^*\yb$, and applying a point-wise nonlinearity, $\mathcal{S}_{\theta}$.
This is illustrated as a block diagram with unfolded iterates in Fig. \ref{fig:block_diagram} (b).
Many other iterate methods for solving Eq. \eqref{eq:cost2}, including ADMM \cite{boyd2011distributed}, FISTA\cite{beck2009fast}, and SALSA\cite{afonso2010fast}, also rely on these basic building blocks.

%and this can be approximated into taylor series like
%\begin{eqnarray}
%f(\xb_*)=h_\theta\left( \alpha H^T \yb +(I-\alpha H^T H) \xb_* \right),\label{eq:prox2}
%\end{eqnarray}

\begin{figure*}[!h]
\centering
\includegraphics[trim = 0mm 0mm 0mm 0mm,clip=true,width=15 cm]{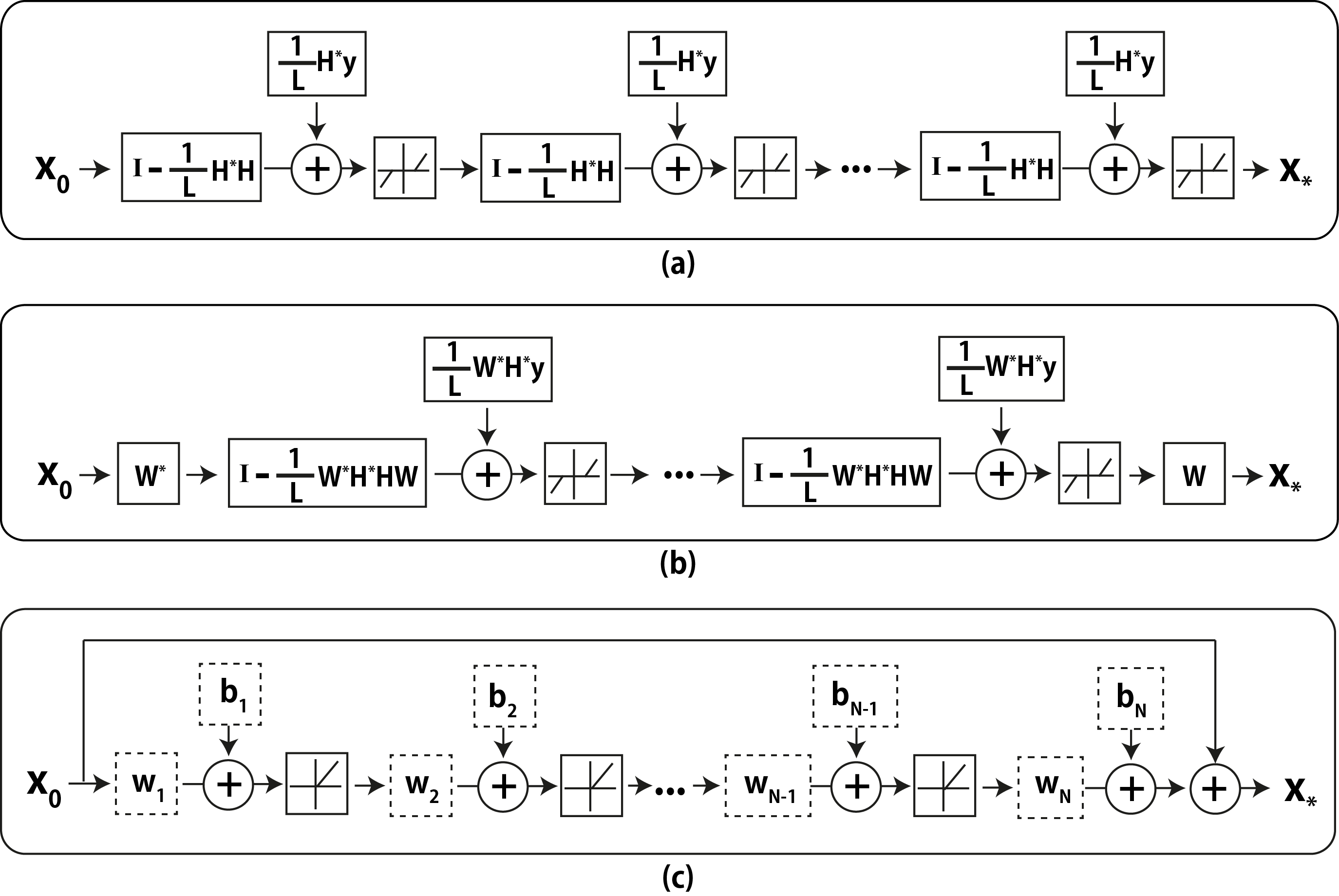}
\caption{Block diagrams about (a) unfolded version of iterative shrinkage method \cite{gregor2010learning}, (b) unfolded version of iterative shrinkage method with sparsifying transform ($\Wb$) and (c) convolutional network with the residual framework. $L$ is the Lipschitz constant, $\xb_0$ is the initial estimates, $b_i$ is learned bias, $w_i$ is learned convolutional kernel. The broken line boxes in (c) indicate the variables to be learned. }%\hl{I would add a box (light gray?) around each diagrams to better distinguish them. I had a bit of trouble identifying each one at the beginning.}}
\label{fig:block_diagram}
\end{figure*}

\section{Proposed Method: FBPConvNet}
\label{sec:proposed}
The success of iterative methods consisting of filtering plus pointwise nonlinearities on normal-convolutional inverse problems suggests that CNNs may be a good fit for these problems as well.
Based on this insight, we propose a new approach to these problems, which we call the \emph{FBPConvNet}.
The basic structure of the FBPConvNet algorithm is to apply the discretized FBP from Section~\ref{sec:direct_inv} to the measurements and then use this as the input of a CNN which is trained to regress the FBP result to a suitable ground truth image.
This approach applies in principle to all normal-convolutional inverse problems, but we have focused in this work on CT reconstruction.
We now describe the method in detail.

\subsection{Filtered Back Projection}
While it would be possible to train a CNN to regress directly from the measurement domain to the reconstruction domain, performing the FBP first greatly simplifies the learning.
The FBP encapsulates our knowledge about the physics of the inverse problem and also provides a warm start to the CNN.
For example, in the case of CT reconstruction, if the sinogram is used as input, the CNN must encode a change between polar and Cartesian coordinates, which is completely avoided when the FBP is used as input.
We stress again that, while the FBP is specific to CT, Section \ref{sec:iter_inv} shows that efficient, direct inversions are always available for normal-convolutional inverse problems.

\subsection{Deep Convolutional Neural Network Design}\label{sec:CNN_design}

\begin{figure*}[htbp]
\centering
\includegraphics[trim = 0mm 0mm 0mm 0mm,clip=true,width=14cm]{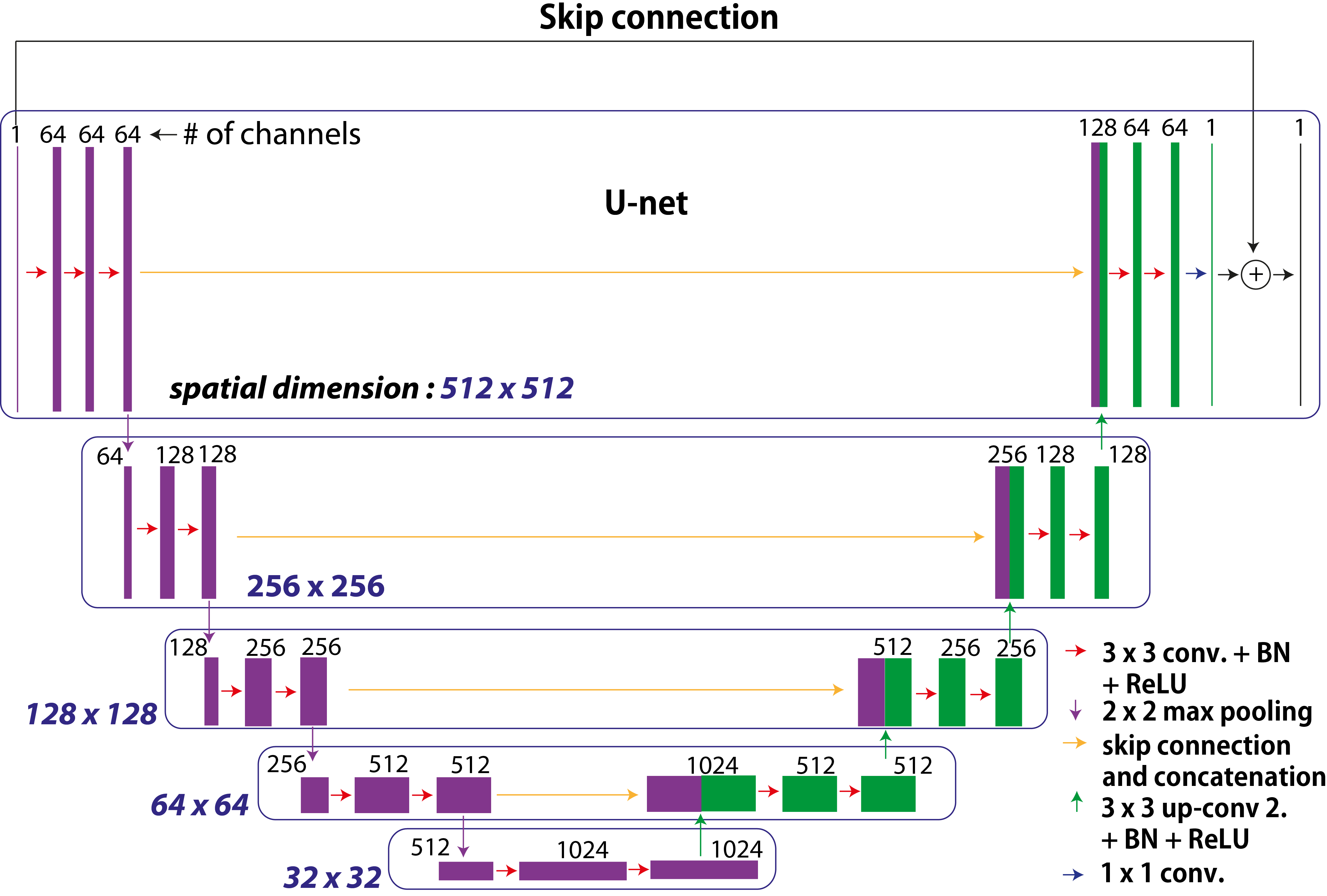}
\caption{Structure of the proposed deep convolutional network for deconvolution.}
\label{fig:whole_structure}
\end{figure*}

While we were inspired by the general form of the proximal update, \eqref{eq:prox3}, to apply a CNN to inverse problems of this form, our goal here is not to imitate iterative methods (e.g. by building a network that corresponds to an unrolled version of some iterative method), but rather to explore a state-of-the-art CNN architecture.
To this end, we base our CNN on the U-net \cite{ronneberger2015u}, which was originally designed for segmentation.
There are several properties of this architecture that recommend it for our purposes.

{\bf Multilevel decomposition.}
The U-net employs a dyadic scale decomposition based on max pooling, so that the effective filter size in the middle layers is larger than that of the early and late layers.
This is critical for our application because the filters corresponding to $H^*H$ (and its inverse) may have non-compact support, e.g. in CT.
Thus, a CNN with a small, fixed filter size may not be able to effectively invert $H^*H$.
This decomposition also has a nice analog to the use of multiresolution wavelets in iterative approaches.

{\bf Multichannel filtering.}
U-net employs multichannel filters, such that there are multiple feature maps at each layer.
This is the standard approach in CNNs~\cite{krizhevsky2012imagenet} to increase the expressive power of the network \cite{zeiler2014visualizing}.
The multiple channels also have an analog in iterative methods:
In the ISTA formulation \eqref{eq:prox3}, we can think of the wavelet coefficient vector $\ab$ as being partitioned into different channels, with each channel corresponding to one wavelet subband~\cite{da2006nonsubsampled,mallat1999wavelet}.
Or, in ADMM\cite{boyd2011distributed}, the split variables can be viewed as channels. 
The CNN architecture greatly generalizes this by allowing filters to make arbitrary combinations of filters.

{\bf Residual learning.}
As a refinement of the original U-net, we add a skip connection~\cite{kim2015accurate} between input and output, which means that the network actually learns the difference between input and output.
This approach mitigates the vanishing gradient problem \cite{he2015deep} during training.
This yields a noticeable increase in performance compared to the same network without the skip connection.

{\bf Implementation details.}
We made two additional modification to U-net.
First, we use zero-padding so that the image size does not decrease after each convolution.
Second, we replaced the last layer with a convolutional layer which reduces the 64 channels to a single output image.
This is necessary because the original U-net architecture results in two channgels: foreground and background.

\section{Experiments and results}
\label{sec:experiments}
We now describe our experimental setup and results.
Though the FBPConvNet algorithm is general, we focus here on sparse-view X-ray CT reconstruction.
We compare FBPConvNet to FBP alone and a state-of-the-art iterative reconstruction method~\cite{McCann:16}.
This method (which we will refer to as the \emph{TV} method for brevity) solves a version of Eq. \eqref{eq:cost2} with the popular TV regularization via ADMM.
It exploits the convolutional structure of $H^*H$ by using FFT-based filtering in its iterates.

Our experiments proceed as follows:
We begin with a full view sinogram (either synthetically generated or from real data).
We compute its FBP (standard high quality reconstruction) and take this as the ground truth.
We then compare the results of applying the TV method to the subsampled sinogram with the results of applying the FBPConvNet to the same.
This type of sparse-view reconstruction is of particular interest for human imaging because, e.g., a twenty times reduction in the number of views corresponds to a twenty times reduction in the radiation dose received by the patient.

\subsection{Data Preparation}
We used three datasets for evaluations of the proposed method.
The first two are synthetic in that the sinograms are computed using a digital forward model, while the last comes from real experiments.
\begin{enumerate}
    \item The \emph{ellipsoid dataset} is a synthetic dataset that comprises 500 images of ellipses of random intensity, size, and location.
    Sinograms for this data are 729 pixels by 1,000 views and are created using the analytical expression for the X-ray transform of an ellipse.
    The Matlab function \texttt{iradon} is used for FBPs.
    \item The \emph{biomedical dataset} is a synthetic dataset that comprises 500 real in-vivo CT images from the Low-dose Grand challenge competition from database made by the Mayo clinic.
   Sinograms for this data are 729 pixels by 1,000 views and are created using the Matlab function \texttt{radon}.
   \texttt{iradon} is again used for FBPs.
    \item The \emph{experimental dataset} is a real CT dataset that comprises 377 sinograms collected from an experiment at the TOMCAT beam line of the Swiss Light Source at the Paul Scherrer Institute in Villigen, Switzerland.
    Each sinogram is 1493 pixels by 721 views and comes from one z-slice of a single rat brain.
    FBPs were computed using our own custom routine which closely matches the behavior of \texttt{iradon} while accommodating different sampling steps in the sinogram an reconstruction domains.
\end{enumerate}

 To make sparse-view FBP images in synthetic datasets, we uniformly subsampled the sinogram by factors of 7 and 20 corresponding to 143 and 50 views, respectively. For the real data, we subsampled by factors of 5 and 14 corresponding to 145 and 52 views.

\subsection{Training Procedure}

{\bf FBPConvNet.}
In case of synthetic data, the total number of training images is 475. The number of test images is 25.
In the case of the biomedical dataset, the test data is chosen from a different subject than the training set.  For the real data, the total number of training images is 327. The number of test images is 25.
The test data are obtained from the last z-slices with the gap of 25 slices left between testing and training data.
All images are scaled between 0 and 550. 

The CNN part of the FBPConvNet is trained using pairs of low-view FBP images and full-view FBP images as input and output, respectively.
Note that this training strategy means that the method is applicable to real CT reconstructions where we do not have access to an oracle reconstruction.

We use the MatConvNet toolbox (ver. 20) \cite{vedaldi15matconvnet} to implement the FBPConvNet training and evaluation, with a slight modification:
We clip the computed gradients to a fixed range to prevent the divergence of the cost function \cite{kim2015accurate,pascanu2013difficulty}. We use a Titan Black GPU graphic processor (NVIDIA Corporation) for training and evaluation. Total training time is about 15 hours for 101 iterations.

The hyper parameters for training are as follows: learning rate decreasing logarithmically from 0.01 to  0.001; batchsize equals 1; momentum equals 0.99; and the clipping value for gradient equals $10^{-2}$.
We use flip-flop data augmentation in both horizontal and vertical directions during the training phase to reduce overfitting \cite{krizhevsky2012imagenet}.

{\bf State-of-the-art TV reconstruction.}
For completeness, we comment on how the iterative method used the training and testing data.
Though it may be a fairer comparison to require the TV method to select its parameters from the training data (as the FBPConvNet does), we instead simply select the parameters that optimize performance on the training set (via a golden-section search).
We do this with the understanding that the parameter is usually tuned by hand in practice and because the correct  way to learn these parameters from data remains an open question. %/cite{}?

%%%%%%%%%%%%%%%%%%%%%%%%%%%%%%%%%%%%

% todo: fix c vs a vs x

\section{Experimental Results}\label{sec:result}
 We use SNR as a quantitative metric. If $\xb$ is the oracle and $\hat{\xb}$ is the reconstructed image, SNR is given by
\begin{equation}
    \mbox{SNR}=\max_{a,b\in\Rd} 20 \log \frac{\|\xb\|_2}{\|\xb-a\hat{\xb}+b\|_2},
\end{equation}
where a higher SNR value corresponds to a better reconstruction.

\subsection{Ellipsoidal Dataset}

\begin{figure*}[htbp]
\centering
\includegraphics[trim = 0mm 50mm 0mm 40mm,clip=true,width=16.5 cm]{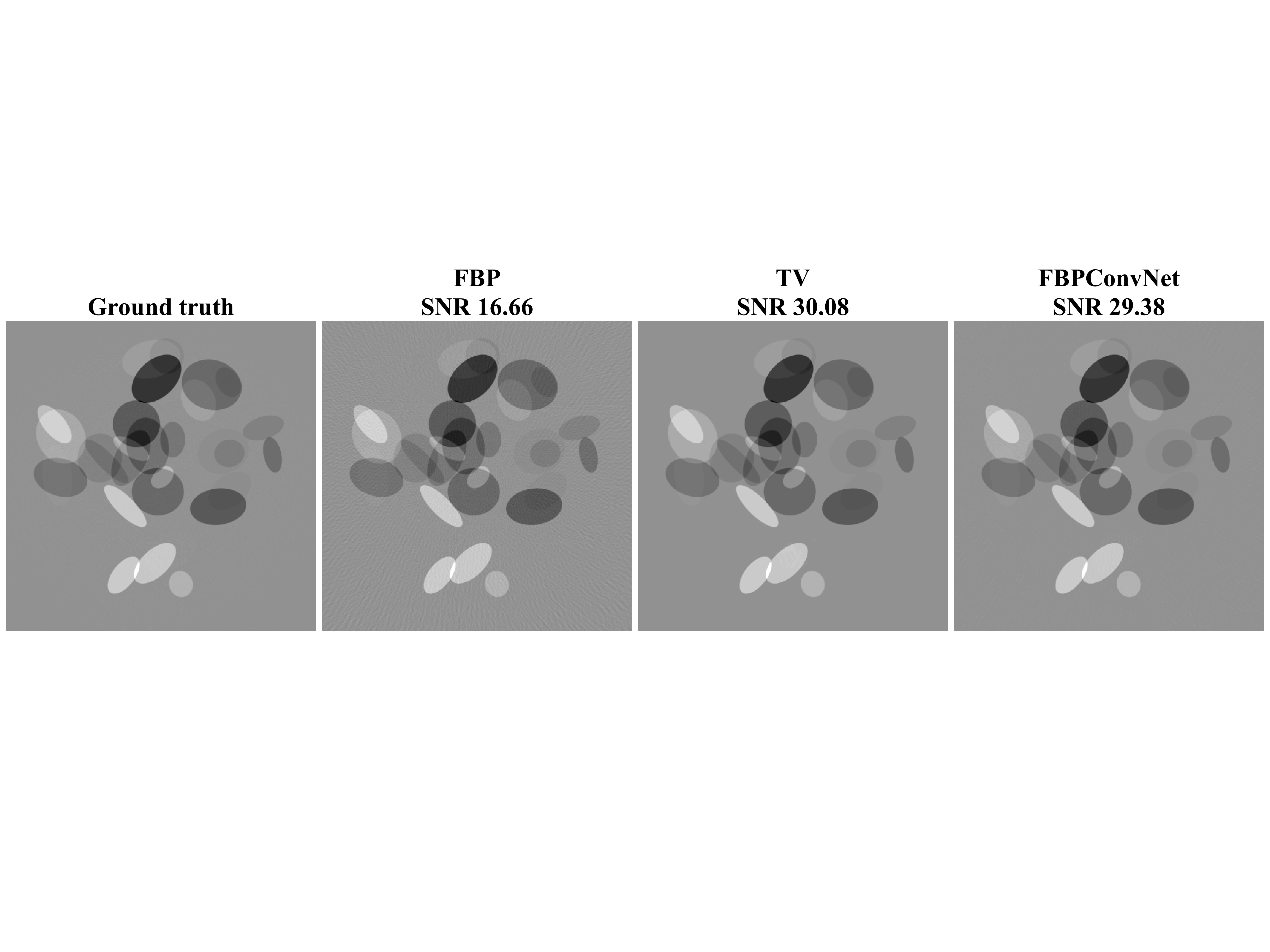}
\includegraphics[trim = 0mm 50mm 0mm 49.5mm,clip=true,width=16.5 cm]{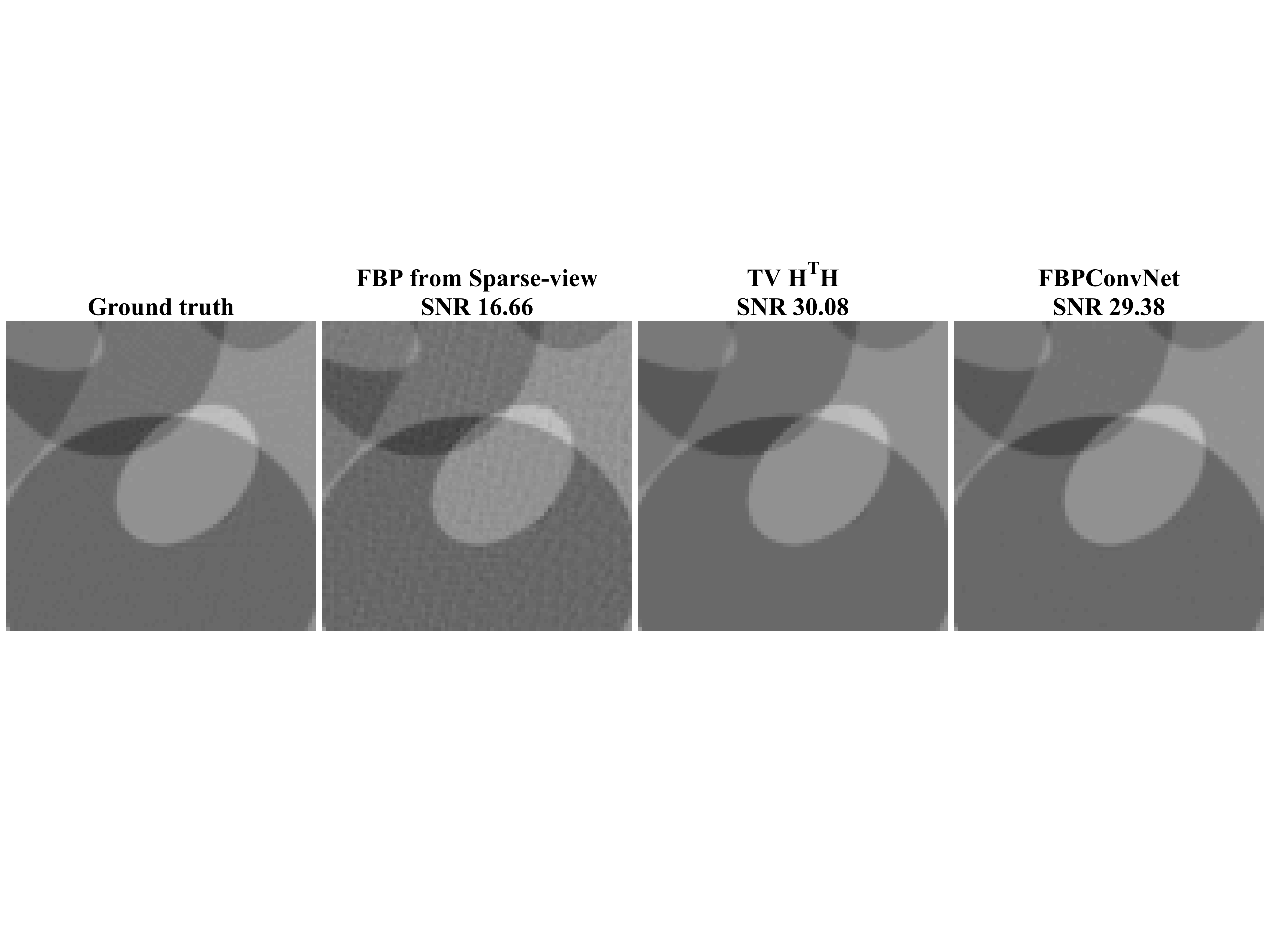}
\caption{Reconstructed images of ellipsoidal dataset from 143 views using FBP, TV regularized convex optimization \cite{McCann:16}, and the FBPConvNet. The first row shows the ROI with full region of image, and the second row shows magnified ROI for the appearance of differences. All subsequent figures keep the same manner.} %\hl{I don't know if the legends are detailed enough}}
\label{fig:res1_el}
\end{figure*}

\begin{figure*}[htbp]
\centering
\includegraphics[trim = 0mm 50mm 0mm 40mm,clip=true,width=16.5 cm]{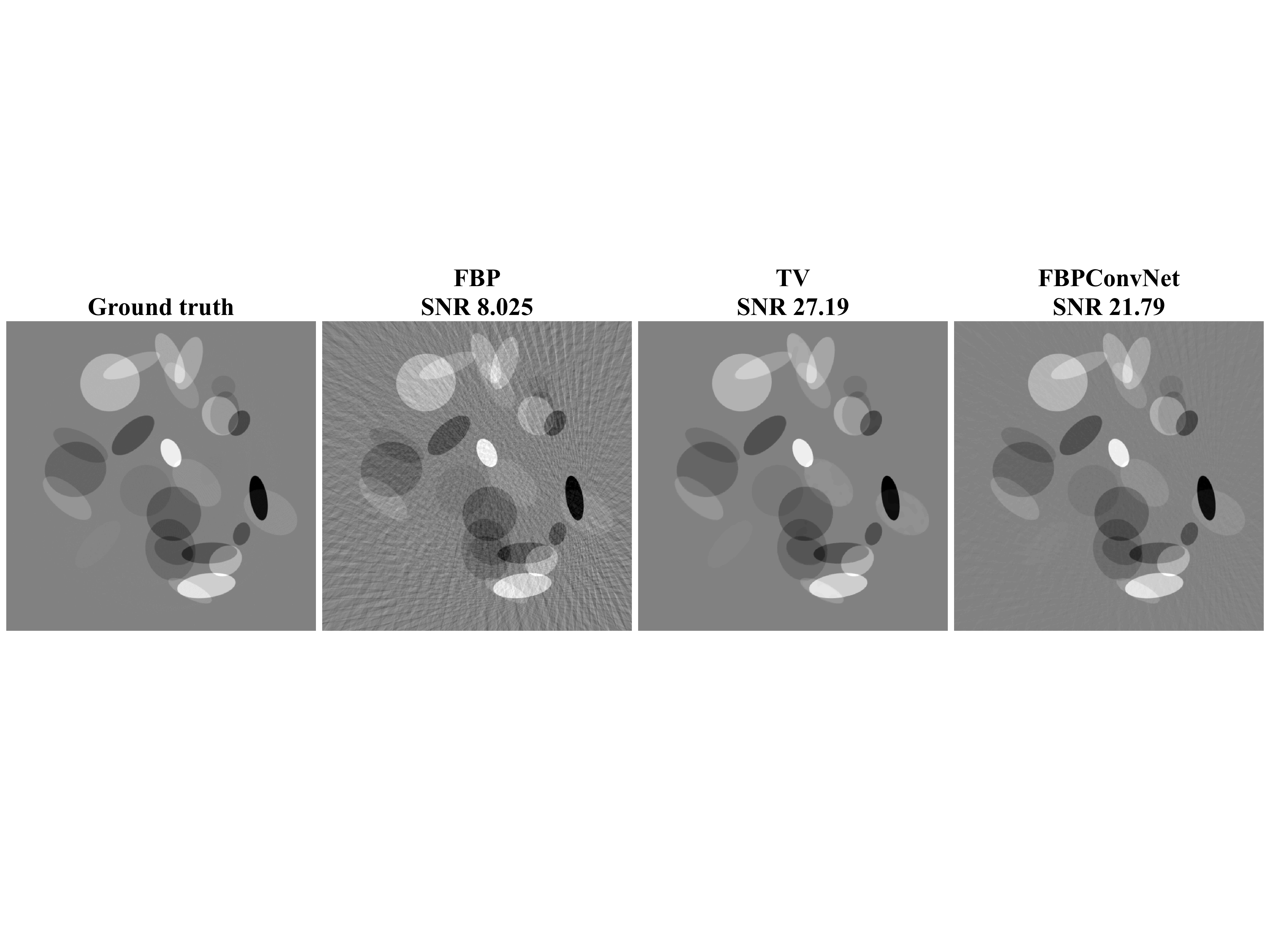}
\includegraphics[trim = 0mm 50mm 0mm 49.5mm,clip=true,width=16.5 cm]{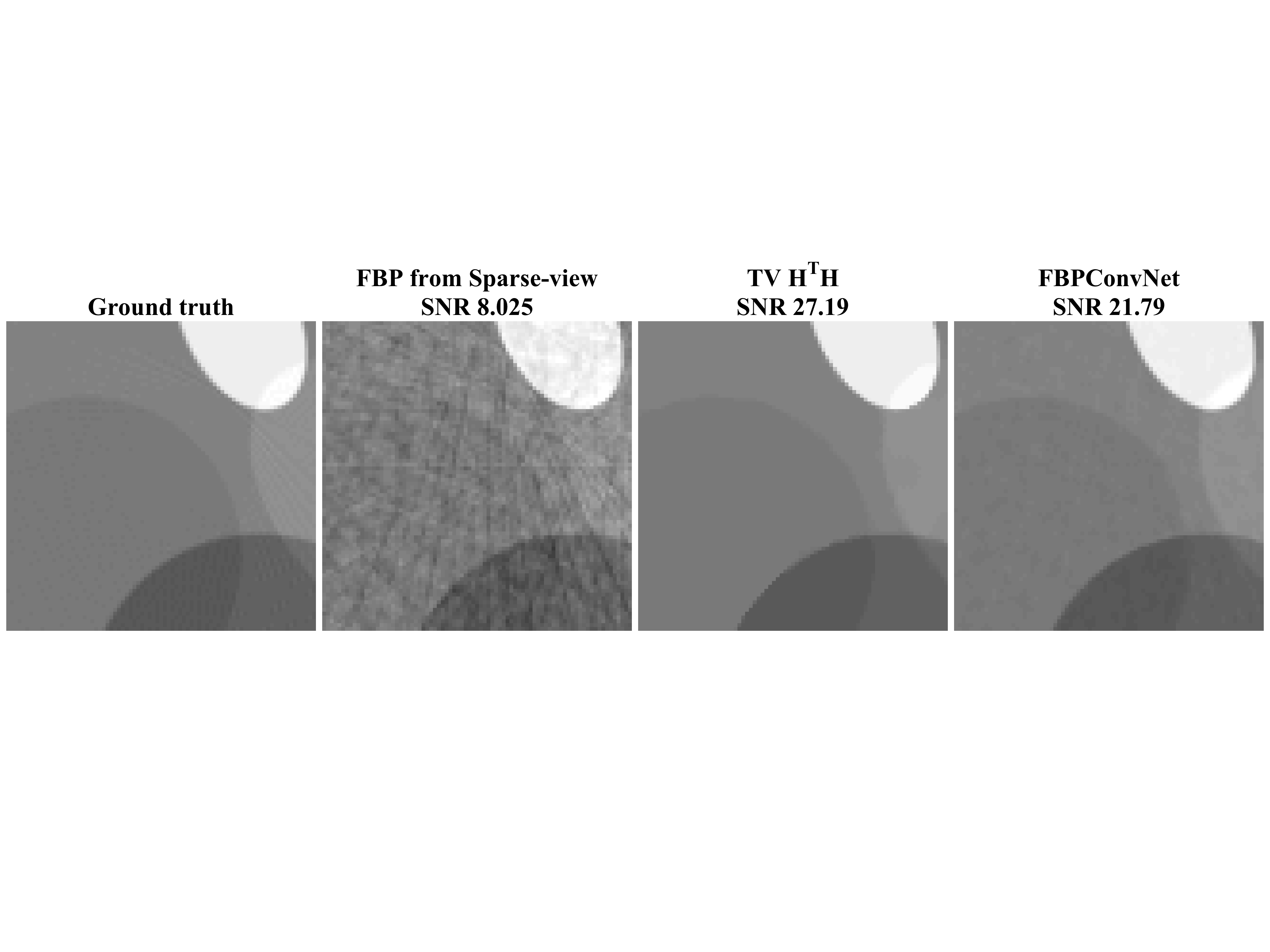}
\caption{Reconstructed images of ellipsoidal dataset from 50 views using FBP, TV regularized convex optimization \cite{McCann:16}, and the FBPConvNet.}
\label{fig:res2_el}
\end{figure*}

Figures~\ref{fig:res1_el} and \ref{fig:res2_el} and Table~\ref{tab:numerical} show the results for the ellisoidal dataset.
In the seven times downsampling case, Figure~\ref{fig:res1_el},
the full-view FBP (ground truth) shows nearly artifact-free ellipsoids, while the sparse-view FBP shows significant line artifacts (most visible in the background).
Both the TV and FBPConvNet methods significantly reduce these artifacts, giving visually indistinguishable results.
When the downsampling is increased to twenty times, Figure~\ref{fig:res2_el}, the line artifacts in the sparse-view FBP reconstruction are even more pronounced.
Both the TV and FBPConvNet reduce these artifacts, though the FBPConvNet retains some of the artifacts.
The average SNR on the testing set for the TV method is higher than that of the the FBPConvNet.
This is a reasonable results given that the phantom is piecewise constant and thus the TV regularization should be optimal~\cite{chambolle2004algorithm,unser2016representer}.

\begin{table}[htbp]%\multirow{2}{*}{Images}
\caption{Comparison of SNR between different reconstruction algorithms for numerical ellipsoidal dataset.}\label{tab:el}
\renewcommand{\arraystretch}{1.2} \centering
\begin{tabular}{c|l||ccc}
  \hline
  \hline
 \multicolumn{2}{l||}{\diagbox{Metrics}{~~~~~~~~Methods}}	& FBP  	& TV \cite{McCann:16} & Proposed\\
\hline
\hline
  \multirow{2}{*}{avg. SNR (dB)}&   143 \mbox{views} (x7)& 16.09 	& 29.48 & 28.96 \\ 
\cline{2-5}
  &  50 \mbox{views} (x20)	 						& 8.44 	& 27.69 & 23.84\\
\hline
\hline
\end{tabular}\label{tab:numerical}
\end{table}

\subsection{Biomedical Dataset}

Figures~\ref{fig:res1_biomed} and \ref{fig:res2_biomed} and Table~\ref{tab:biomed} show the results for the biomedical dataset.
In Figure \ref{fig:res1_biomed}, again, the sparse-view FBP contains line artifacts.
Both TV and the proposed method remove streaking artifacts satisfactorily; however, the TV reconstruction shows the cartoon-like artifacts that are typical of TV reconstructions. This trend is also observed in severe case (x20) in Fig. \ref{fig:res2_biomed}.
Quantitatively, the proposed method outperforms the TV method.

\begin{figure*}[htbp]
\centering
\includegraphics[trim = 0mm 50mm 0mm 40mm,clip=true,width=16.5 cm]{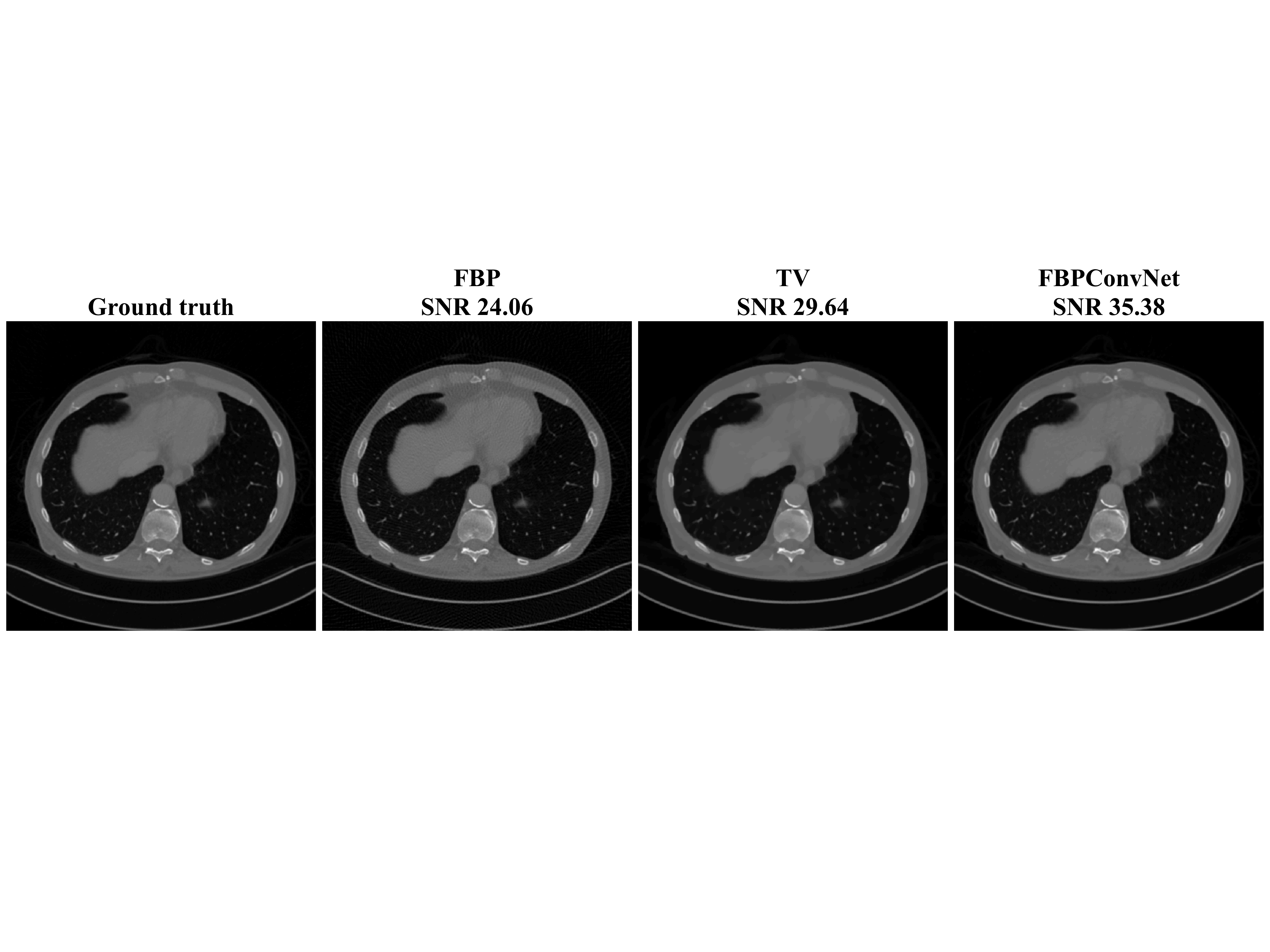}
\centering
\includegraphics[trim = 0mm 50mm 0mm 49.5mm,clip=true,width=16.5 cm]{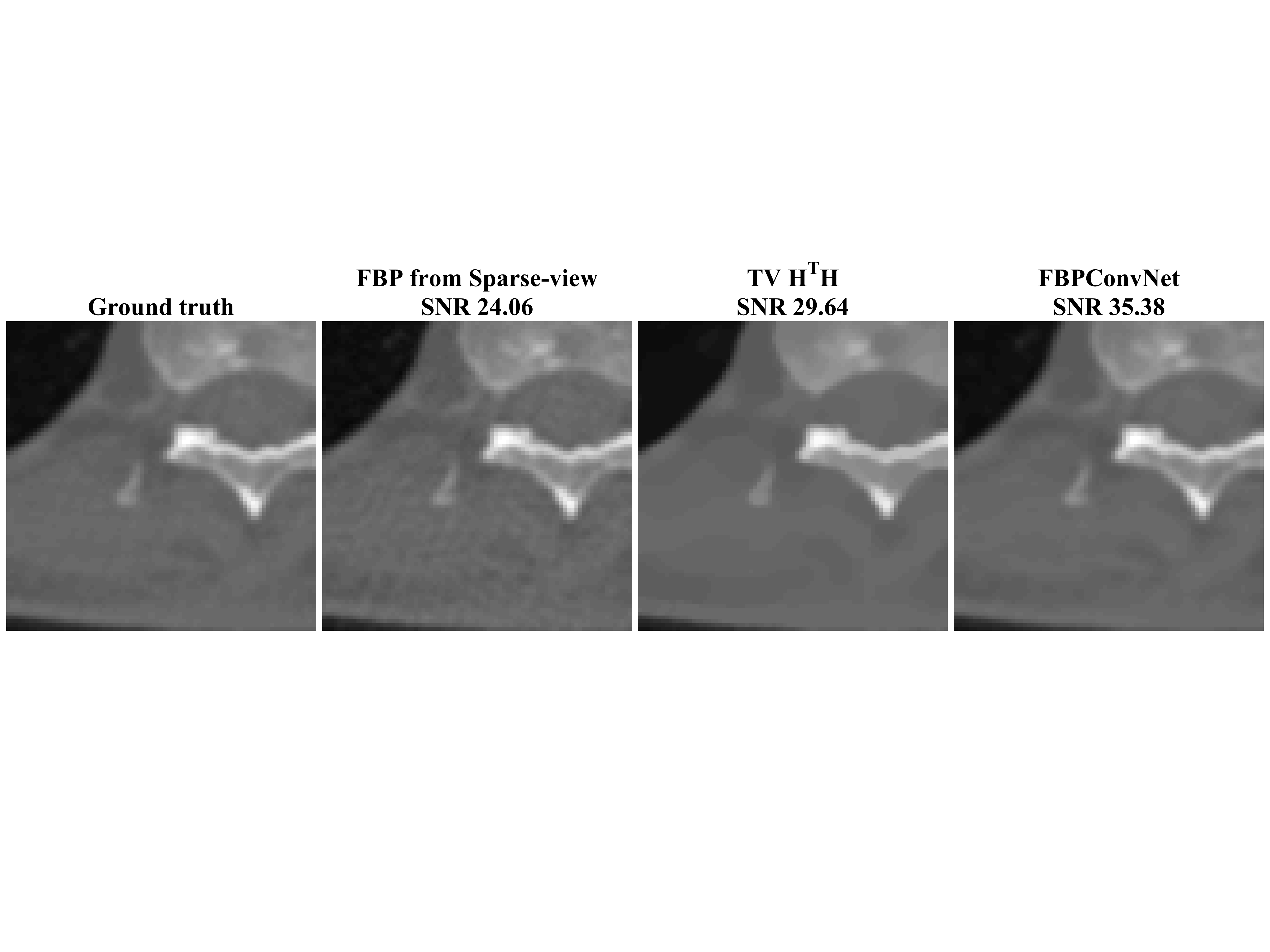}
\caption{Reconstructed images of biomedical dataset from 143 views using FBP, TV regularized convex optimization \cite{McCann:16}, and the FBPConvNet.}
\label{fig:res1_biomed}
\end{figure*}

\begin{figure*}[htbp]
\centering
\includegraphics[trim = 0mm 50mm 0mm 40mm,clip=true,width=16.5 cm]{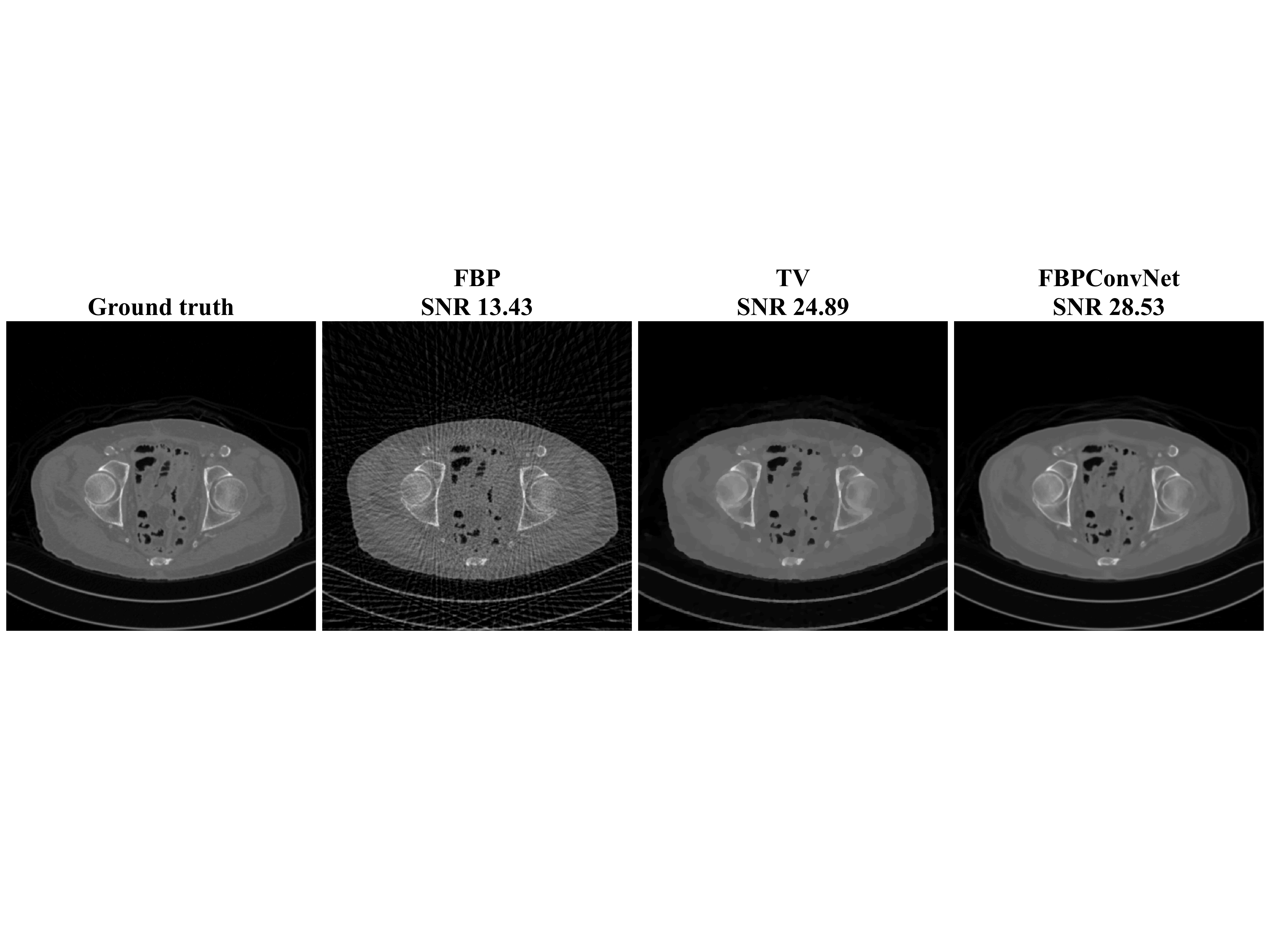}
\includegraphics[trim = 0mm 50mm 0mm 49.5mm,clip=true,width=16.5 cm]{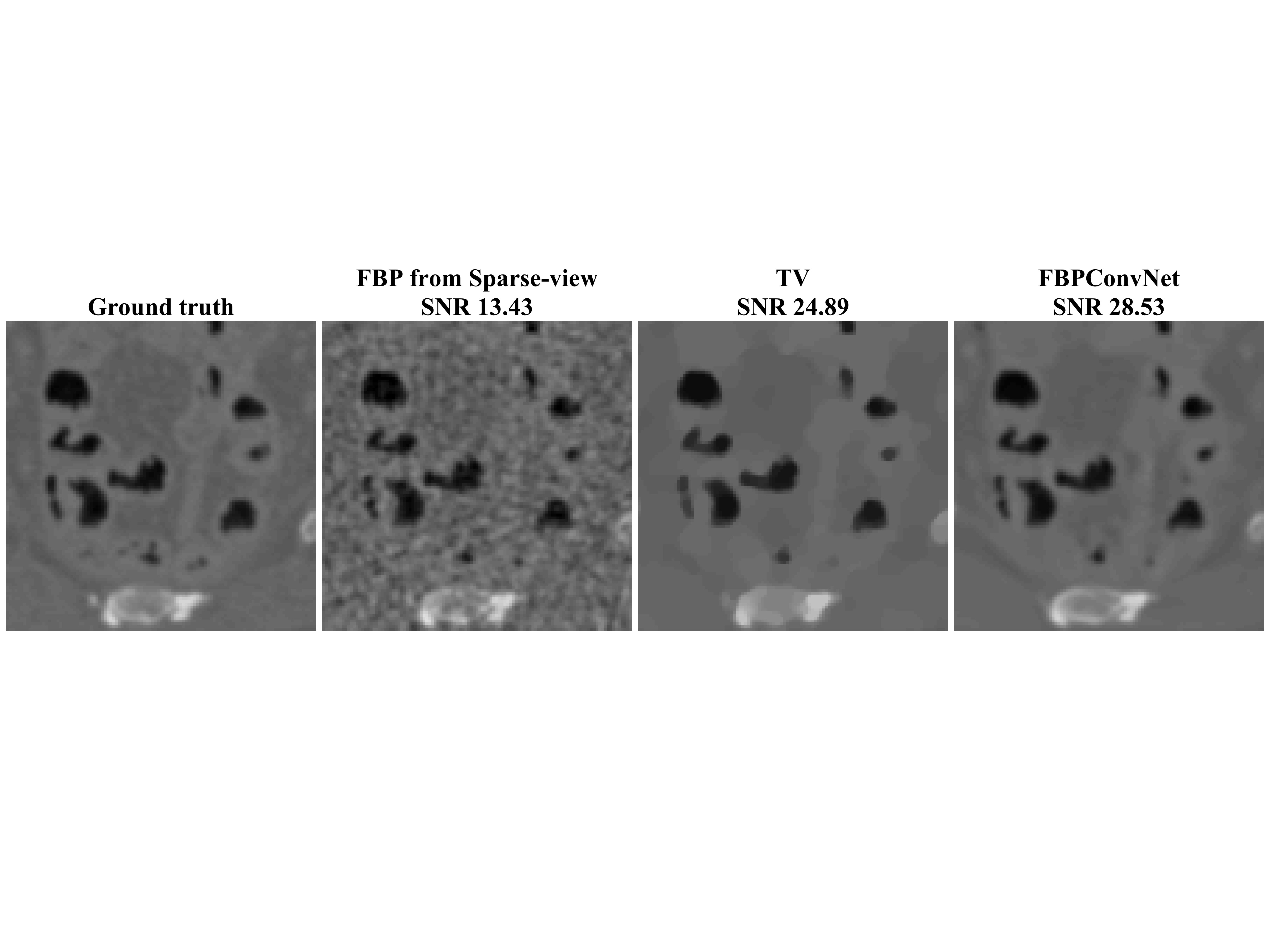}
\caption{Reconstructed images of biomedical dataset from 50 views using FBP, TV regularized convex optimization \cite{McCann:16}, and the FBPConvNet.}
\label{fig:res2_biomed}
\end{figure*}

\begin{table}[htbp]%\multirow{2}{*}{Images}
\caption{Comparison of SNR between different reconstruction algorithms for biomedical dataset.}
\renewcommand{\arraystretch}{1.2} \centering
\begin{tabular}{c|l||ccc}
  \hline
  \hline
 \multicolumn{2}{l||}{\diagbox{Metrics}{~~~~~~~~Methods}}	& FBP  	& TV \cite{McCann:16} & Proposed\\
\hline
\hline
  \multirow{2}{*}{avg. SNR (dB)}&   143 \mbox{views} (x7)& 24.97 	& 31.92 & 36.15 \\ 
\cline{2-5}
  &  50 \mbox{views} (x20)	 						& 13.52 	& 25.2 & 28.83\\
\hline
\hline
\end{tabular}\label{tab:biomed}
\end{table}

\subsection{Experimental Dataset}

\begin{figure*}[htbp]
\centering
\includegraphics[trim = 0mm 50mm 0mm 40mm,clip=true,width=16.5 cm]{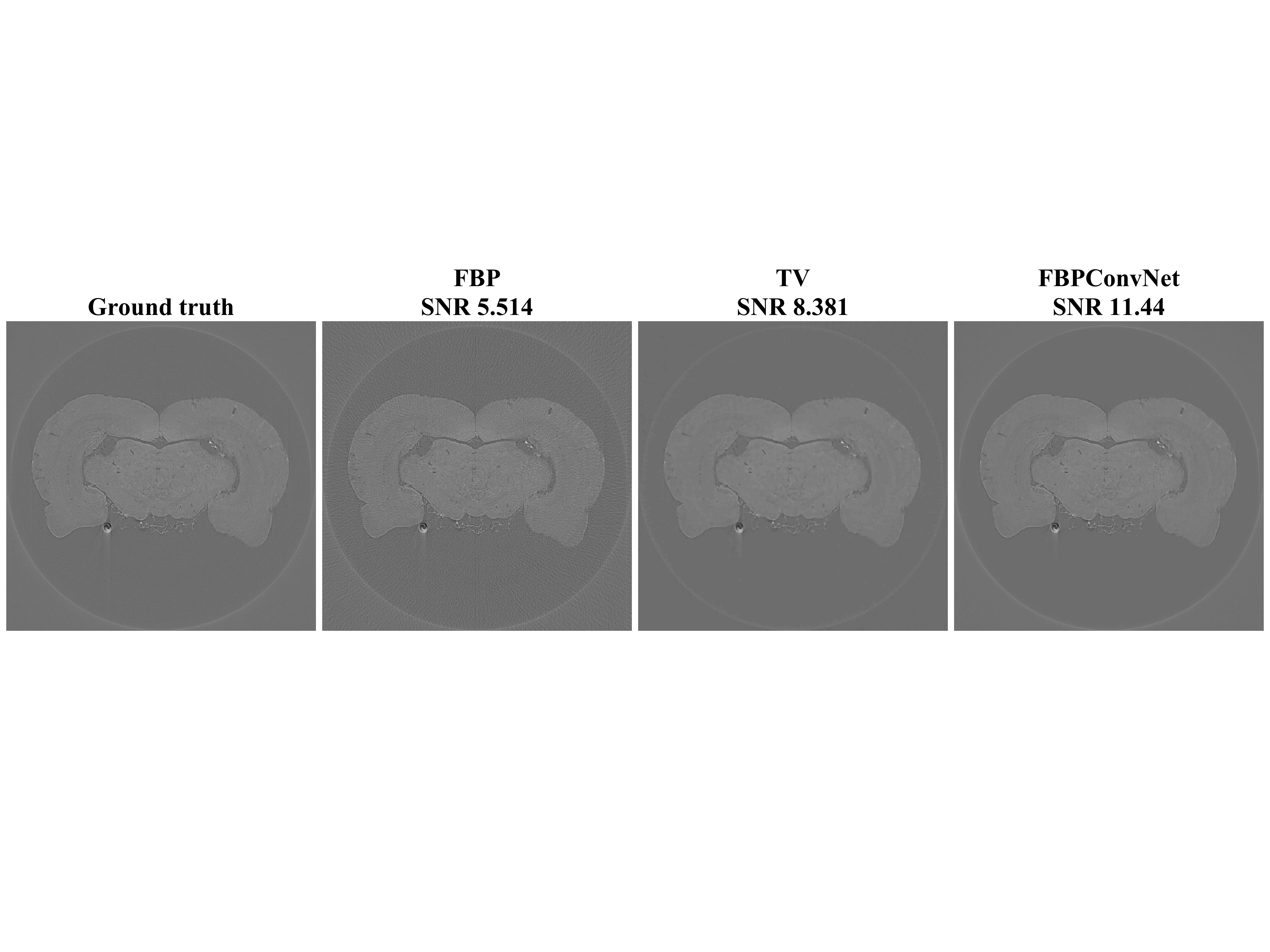}
\includegraphics[trim = 0mm 50mm 0mm 49.5mm,clip=true,width=16.5 cm]{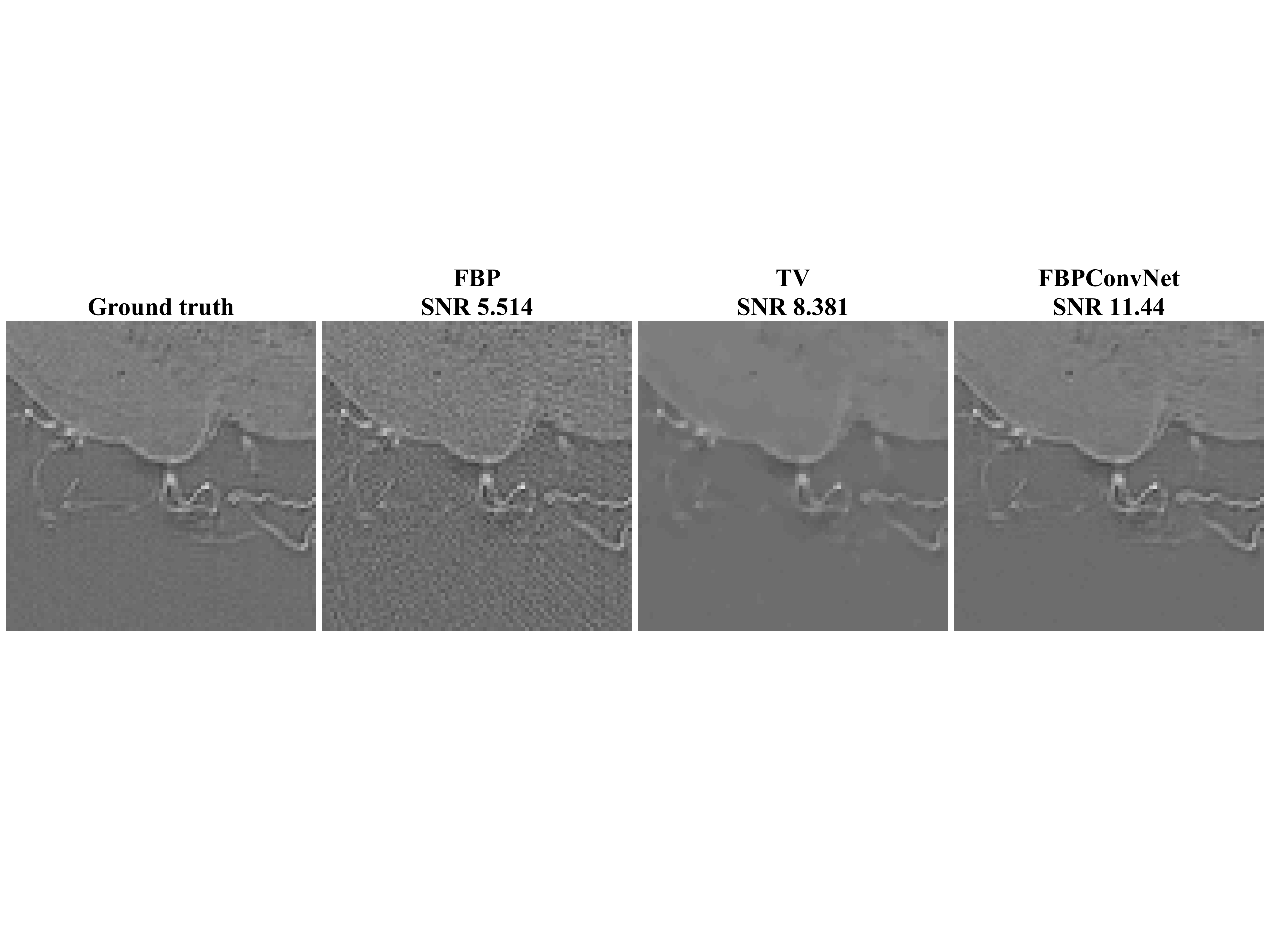}
\caption{Reconstructed images of experimental dataset from 145 views using FBP, TV regularized convex optimization \cite{McCann:16}, and the FBPConvNet.}
\label{fig:res1_rat}
\end{figure*}

\begin{figure*}[htbp]
\centering
\includegraphics[trim = 0mm 50mm 0mm 40mm,clip=true,width=16.5 cm]{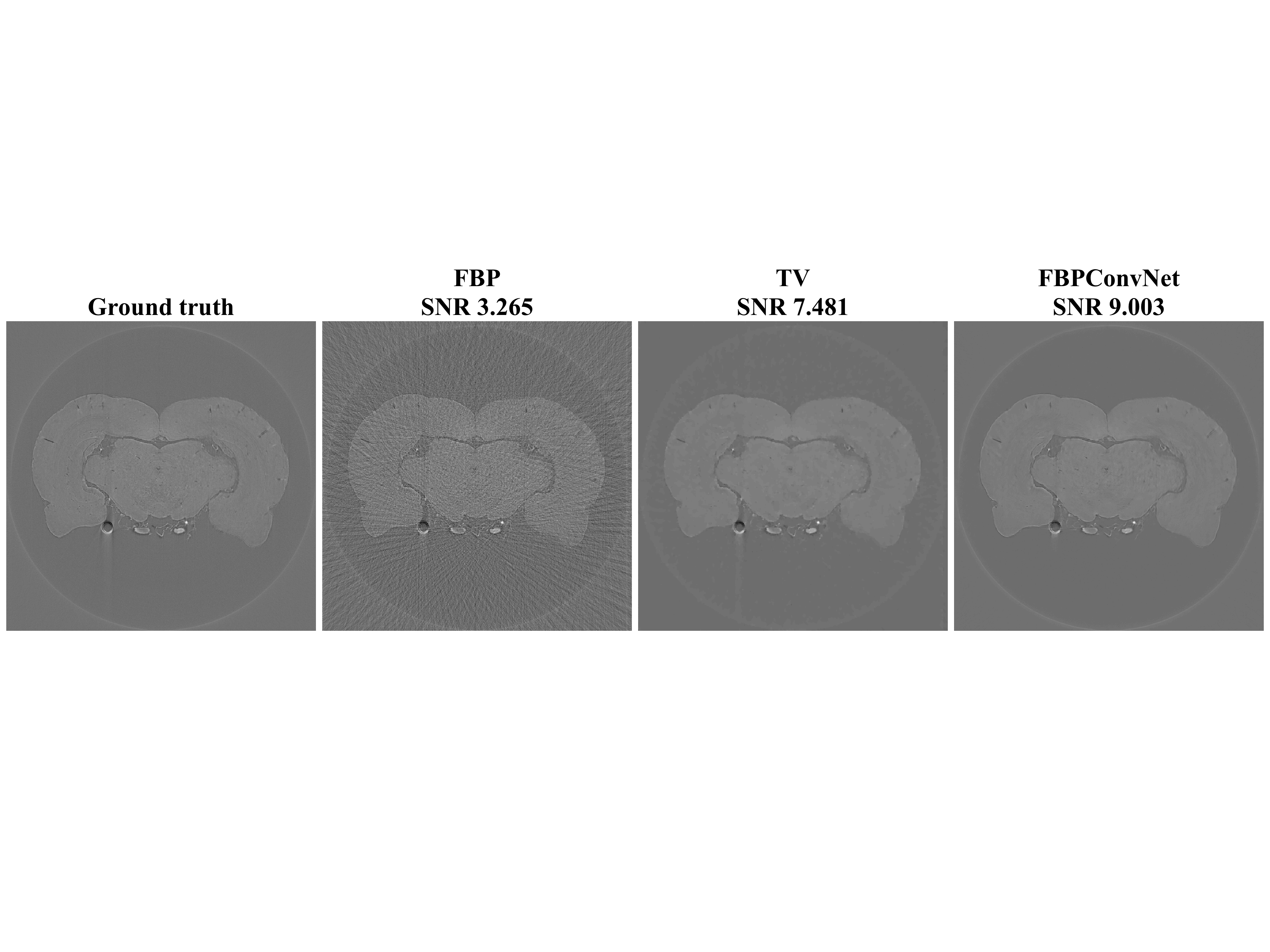}
\includegraphics[trim = 0mm 50mm 0mm 49.5mm,clip=true,width=16.5 cm]{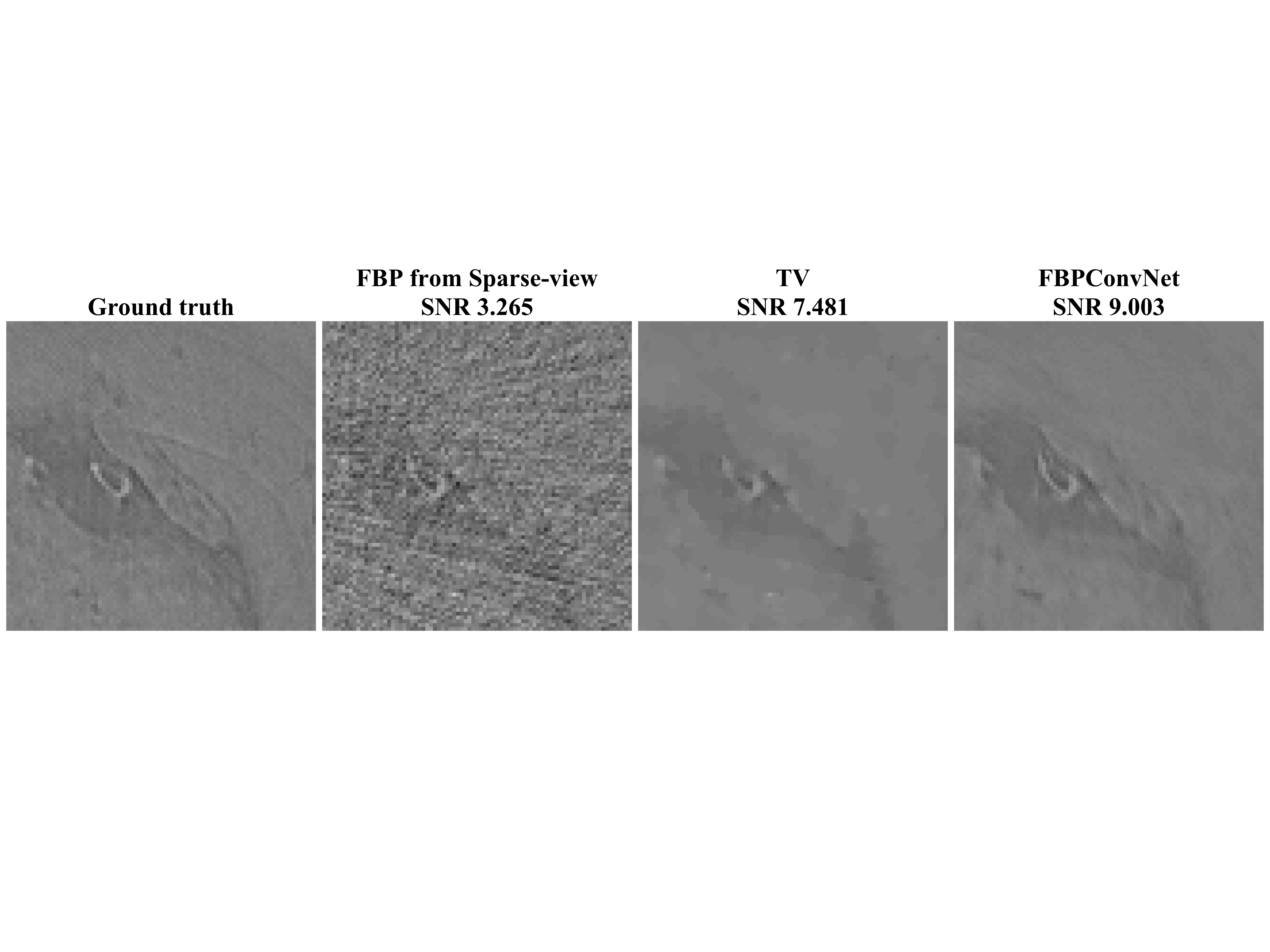}
\caption{Reconstructed images of experimental dataset from 52 views using FBP, TV regularized convex optimization \cite{McCann:16}, and the FBPConvNet.}
\label{fig:res2_rat}
\end{figure*}

\begin{table}[htbp]%\multirow{2}{*}{Images}
\caption{Comparison of SNR between different reconstruction algorithms for  experimental dataset.}
\renewcommand{\arraystretch}{1.2} \centering
\begin{tabular}{c|l||ccc}
  \hline
  \hline
 \multicolumn{2}{l||}{\diagbox{Metrics}{~~~~~~~~Methods}}	& FBP  	& TV \cite{McCann:16} & Proposed\\
\hline
\hline
  \multirow{2}{*}{avg. SNR (dB)}&   145 \mbox{views} (x5)& 5.38 	& 8.25 & 11.34 \\ 
\cline{2-5}
  &  51 \mbox{views} (x14)	 						& 3.29 	& 7.25 & 8.85\\
\hline
\hline
\end{tabular}\label{tab:real}
\end{table}

Figures~\ref{fig:res1_rat} and \ref{fig:res2_rat} and Table~\ref{tab:real} show the results for the experimental dataset.
The SNRs of all methods are significantly lower here because of the relatively low contrast of the sinogram.
In Fig. \ref{fig:res1_rat}, we observe the same trend as for the biomedical dataset, where the TV method oversmooths and the FBPConvNet better preserves fine structures.
These trends also appears in twenty times downsampling case (x20) in Fig. \ref{fig:res2_rat}.
The FBPConvNet had a higher SNR than the TV method in both settings.

\section{Discussion}\label{sec:discussion}
The experiments provide strong evidence for the feasibility of the FBPConvNet for sparse-view CT reconstruction.
The conventional iterative algorithm with TV regularization outperformed the FBPConvNet in the ellipsoidal dataset, while the reverse was true for the biomedical and experimental datasets.
In these more-realistic datasets, the SNR improvement of the FBPConvNet came from its ability to preserve fine details in the images.
This points to one advantage of the proposed method over iterative methods: the iterative methods must explicitly impose regularization, while the FBPConvNet effectively learns a regularizer from the data.

The computation time for the FBPConvNet was about 200 ms for the FBP and 200$\sim$300 ms in GPU for the CNN for a $512\times512$ image. This is much faster than the iterative reconstruction, which, in our case, requires around 7 minutes even after the regularization parameters have been selected.

A major limitation of the proposed method is lack of transfer between datasets. For instance, when we put FBP images from a twenty-times subsampled sinogram into the network trained on the seven-times subsampled sinogram, the results retain many artifacts. Handling datasets of different dimensions or subsampling factors requires retraining the network.
Future work could address strategies for heterogeneous datasets.

Our theory suggests that the methodology proposed here is applicable to all problems where the normal operator is shift-invariant; however, we have focused here on X-ray CT reconstruction.
We expect that adapting the method to, e.g., MRI reconstruction will be non-trivial experimentally, because it will require large sets of training data (either from a high-quality forward model or real data) and a high-quality iterative reconstruction algorithm for comparison.
Furthermore, because MRI and DT involve complex values (both in the measurement and reconstruction domains), we need a CNN architecture that correctly handles complex values. Therefore, we leave experimentation on other modalities to future work.

\section{Conclusion}\label{sec:conclusion}
In this paper, we proposed a deep convolutional network for inverse problems with a focus on biomedical imaging. The proposed method, which we call the \emph{FBPConvNet} combines FBP with a multiresolution CNN.
The structure of the CNN is based on U-net, with the addition of residual learning.

This approach was motivated by the convolutional structure of several biomedical inverse problems, including CT, MRI, and DT.
Specifically, we showed conditions on a linear operator that ensure that its normal operator is a convolution.
This results suggests that CNNs are well-suited to this subclass of inverse problems.

The proposed method demonstrated compelling results on synthetic and real data. 
It compared favorably to state-of-the-art iterative reconstruction on the two more realistic datasets.
Furthermore, after training, the computation time of the proposed network per one image is under a second.

\section*{Acknowledgment}
The authors would like to thank Dr. Cynthia McCollough, the Mayo Clinic, the American Association of Physicists in Medicine, and grants EB017095 and EB017185 from the National Institute of Biomedical Imaging and Bioengineering for giving opportunities to use real-invivo CT DICOM images (Fig. \ref{fig:res1_biomed}-\ref{fig:res2_biomed}). The authors also thank thank Dr. Marco Stampanoni, Swiss Light Source, Paul Scherrer Institute, Villigen, Switzerland, for providing real CT sinograms (Fig. \ref{fig:res1_rat}-\ref{fig:res2_rat}). 

% Can use something like this to put references on a page
% by themselves when using endfloat and the captionsoff option.
\ifCLASSOPTIONcaptionsoff
  \newpage
\fi

% trigger a \newpage just before the given reference
% number - used to balance the columns on the last page
% adjust value as needed - may need to be readjusted if
% the document is modified later
%\IEEEtriggeratref{8}
% The "triggered" command can be changed if desired:
%\IEEEtriggercmd{\enlargethispage{-5in}}

% references section

% can use a bibliography generated by BibTeX as a .bbl file
% BibTeX documentation can be easily obtained at:
% http://www.ctan.org/tex-archive/biblio/bibtex/contrib/doc/
% The IEEEtran BibTeX style support page is at:
% http://www.michaelshell.org/tex/ieeetran/bibtex/
\bibliographystyle{IEEEtran}
% argument is your BibTeX string definitions and bibliography database(s)
\clearpage
%\bibliography{biblio_kh,biblio_mtm}

\begin{thebibliography}{10}
\providecommand{\url}[1]{#1}
\csname url@samestyle\endcsname
\providecommand{\newblock}{\relax}
\providecommand{\bibinfo}[2]{#2}
\providecommand{\BIBentrySTDinterwordspacing}{\spaceskip=0pt\relax}
\providecommand{\BIBentryALTinterwordstretchfactor}{4}
\providecommand{\BIBentryALTinterwordspacing}{\spaceskip=\fontdimen2\font plus
\BIBentryALTinterwordstretchfactor\fontdimen3\font minus
  \fontdimen4\font\relax}
\providecommand{\BIBforeignlanguage}[2]{{%
\expandafter\ifx\csname l@#1\endcsname\relax
\typeout{** WARNING: IEEEtran.bst: No hyphenation pattern has been}%
\typeout{** loaded for the language `#1'. Using the pattern for}%
\typeout{** the default language instead.}%
\else
\language=\csname l@#1\endcsname
\fi
#2}}
\providecommand{\BIBdecl}{\relax}
\BIBdecl

\bibitem{rudin1992nonlinear}
L.~I. Rudin, S.~Osher, and E.~Fatemi, ``Nonlinear total variation based noise
  removal algorithms,'' \emph{Physica D: Nonlinear Phenomena}, vol.~60, no.~1,
  pp. 259--268, 1992.

\bibitem{chambolle2004algorithm}
A.~Chambolle, ``An algorithm for total variation minimization and
  applications,'' \emph{J. Math. imaging and vision}, vol.~20, no. 1-2, pp.
  89--97, 2004.

\bibitem{chang2000adaptive}
S.~G. Chang, B.~Yu, and M.~Vetterli, ``Adaptive wavelet thresholding for image
  denoising and compression,'' \emph{IEEE transactions on image processing},
  vol.~9, no.~9, pp. 1532--1546, 2000.

\bibitem{luisier2007new}
F.~Luisier, T.~Blu, and M.~Unser, ``{A new SURE approach to image denoising:
  Interscale orthonormal wavelet thresholding},'' \emph{IEEE Transactions on
  image processing}, vol.~16, no.~3, pp. 593--606, 2007.

\bibitem{chan1998total}
T.~F. Chan and C.-K. Wong, ``Total variation blind deconvolution,'' \emph{IEEE
  transactions on Image Processing}, vol.~7, no.~3, pp. 370--375, 1998.

\bibitem{krishnan2009fast}
D.~Krishnan and R.~Fergus, ``{Fast image deconvolution using hyper-Laplacian
  priors},'' in \emph{Advances in Neural Information Processing Systems}, 2009,
  pp. 1033--1041.

\bibitem{thevenaz2000interpolation}
P.~Th{\'e}venaz, T.~Blu, and M.~Unser, ``Interpolation revisited [medical
  images application],'' \emph{IEEE Transactions on medical imaging}, vol.~19,
  no.~7, pp. 739--758, 2000.

\bibitem{bertalmio2000image}
M.~Bertalmio, G.~Sapiro, V.~Caselles, and C.~Ballester, ``Image inpainting,''
  in \emph{Proceedings of the 27th annual conference on Computer graphics and
  interactive techniques}.\hskip 1em plus 0.5em minus 0.4em\relax ACM
  Press/Addison-Wesley Publishing Co., 2000, pp. 417--424.

\bibitem{tibshirani1996regression}
R.~Tibshirani, ``Regression shrinkage and selection via the lasso,''
  \emph{Journal of the Royal Statistical Society. Series B (Methodological)},
  pp. 267--288, 1996.

\bibitem{jung_k-t_2009}
H.~Jung, K.~Sung, K.~S. Nayak, E.~Y. Kim, and J.~C. Ye,
  ``\BIBforeignlanguage{en}{k-t {FOCUSS}: {A} general compressed sensing
  framework for high resolution dynamic {MRI}},''
  \emph{\BIBforeignlanguage{en}{Magnetic Resonance in Medicine}}, vol.~61,
  no.~1, pp. 103--116, Jan. 2009.

\bibitem{lingala_accelerated_2011}
S.~G. Lingala, Y.~Hu, E.~DiBella, and M.~Jacob, ``Accelerated {Dynamic} {MRI}
  {Exploiting} {Sparsity} and {Low}-{Rank} {Structure}: k-t {SLR},'' \emph{IEEE
  Transactions on Medical Imaging}, vol.~30, no.~5, pp. 1042--1054, May 2011.

\bibitem{mcgaffin_alternating_2015}
M.~McGaffin and J.~Fessler, ``Alternating dual updates algorithm for {X}-ray
  {CT} reconstruction on the {GPU},'' \emph{IEEE Transactions on Computational
  Imaging}, vol.~1, no.~3, pp. 186--199, Sep. 2015.

\bibitem{McCann:16}
M.~T. McCann, M.~Nilchian, M.~Stampanoni, and M.~Unser, ``{Fast 3D
  reconstruction method for differential phase contrast X-ray CT},'' \emph{Opt.
  Express}, vol.~24, no.~13, pp. 14\,564--14\,581, Jun 2016.

\bibitem{candes2006robust}
E.~J. Cand{\`e}s, J.~Romberg, and T.~Tao, ``{Robust uncertainty principles:
  Exact signal reconstruction from highly incomplete frequency information},''
  \emph{IEEE Trans. Inf. Theory}, vol.~52, no.~2, pp. 489--509, 2006.

\bibitem{lustig2007sparse}
M.~Lustig, D.~Donoho, and J.~M. Pauly, ``{Sparse MRI: The application of
  compressed sensing for rapid MR imaging},'' \emph{Magn. Reson. Med.},
  vol.~58, no.~6, pp. 1182--1195, 2007.

\bibitem{sidky2008image}
E.~Y. Sidky and X.~Pan, ``Image reconstruction in circular cone-beam computed
  tomography by constrained, total-variation minimization,'' \emph{Physics in
  medicine and biology}, vol.~53, no.~17, p. 4777, 2008.

\bibitem{lecun2015deep}
Y.~LeCun, Y.~Bengio, and G.~Hinton, ``Deep learning,'' \emph{Nature}, vol. 521,
  no. 7553, pp. 436--444, 2015.

\bibitem{krizhevsky2012imagenet}
A.~Krizhevsky, I.~Sutskever, and G.~E. Hinton, ``Imagenet classification with
  deep convolutional neural networks,'' in \emph{Advances in neural information
  processing systems}, 2012, pp. 1097--1105.

\bibitem{russakovsky2015imagenet}
O.~Russakovsky, J.~Deng, H.~Su, J.~Krause, S.~Satheesh, S.~Ma, Z.~Huang,
  A.~Karpathy, A.~Khosla, M.~Bernstein \emph{et~al.}, ``Imagenet large scale
  visual recognition challenge,'' \emph{International Journal of Computer
  Vision}, vol. 115, no.~3, pp. 211--252, 2015.

\bibitem{girshick2014rich}
R.~Girshick, J.~Donahue, T.~Darrell, and J.~Malik, ``Rich feature hierarchies
  for accurate object detection and semantic segmentation,'' in
  \emph{Proceedings of the IEEE conference on computer vision and pattern
  recognition}, 2014, pp. 580--587.

\bibitem{long2015fully}
J.~Long, E.~Shelhamer, and T.~Darrell, ``Fully convolutional networks for
  semantic segmentation,'' in \emph{Proceedings of the IEEE Conference on
  Computer Vision and Pattern Recognition}, 2015, pp. 3431--3440.

\bibitem{ronneberger2015u}
O.~Ronneberger, P.~Fischer, and T.~Brox, ``U-net: Convolutional networks for
  biomedical image segmentation,'' in \emph{International Conference on Medical
  Image Computing and Computer-Assisted Intervention}.\hskip 1em plus 0.5em
  minus 0.4em\relax Springer, 2015, pp. 234--241.

\bibitem{burger2012image}
H.~C. Burger, C.~J. Schuler, and S.~Harmeling, ``{Image denoising: Can plain
  neural networks compete with BM3D?}'' in \emph{Computer Vision and Pattern
  Recognition (CVPR), 2012 IEEE Conference on}.\hskip 1em plus 0.5em minus
  0.4em\relax IEEE, 2012, pp. 2392--2399.

\bibitem{xie2012image}
J.~Xie, L.~Xu, and E.~Chen, ``Image denoising and inpainting with deep neural
  networks,'' in \emph{Advances in Neural Information Processing Systems},
  2012, pp. 341--349.

\bibitem{xu2014deep}
L.~Xu, J.~S. Ren, C.~Liu, and J.~Jia, ``Deep convolutional neural network for
  image deconvolution,'' in \emph{Advances in Neural Information Processing
  Systems}, 2014, pp. 1790--1798.

\bibitem{dong2016image}
C.~Dong, C.~C. Loy, K.~He, and X.~Tang, ``Image super-resolution using deep
  convolutional networks,'' \emph{IEEE transactions on pattern analysis and
  machine intelligence}, vol.~38, no.~2, pp. 295--307, 2016.

\bibitem{kim2015accurate}
J.~Kim, J.~K. Lee, and K.~M. Lee, ``Accurate image super-resolution using very
  deep convolutional networks,'' in \emph{Proc. of IEEE Converence on Computer
  Vision and Pattern Recognition (CVPR)}, 2016.

\bibitem{gregor2010learning}
K.~Gregor and Y.~LeCun, ``Learning fast approximations of sparse coding,'' in
  \emph{Proceedings of the 27th International Conference on Machine Learning
  (ICML-10)}, 2010, pp. 399--406.

\bibitem{chen2015learning}
Y.~Chen, W.~Yu, and T.~Pock, ``On learning optimized reaction diffusion
  processes for effective image restoration,'' in \emph{Proceedings of the IEEE
  Conference on Computer Vision and Pattern Recognition}, 2015, pp. 5261--5269.

\bibitem{xin2016maximal}
B.~Xin, Y.~Wang, W.~Gao, and D.~Wipf, ``Maximal sparsity with deep networks?''
  \emph{arXiv preprint arXiv:1605.01636}, 2016.

\bibitem{lin2016does}
H.~W. Lin and M.~Tegmark, ``Why does deep and cheap learning work so well?''
  \emph{arXiv preprint arXiv:1608.08225}, 2016.

\bibitem{daubechies2004iterative}
I.~Daubechies, M.~Defrise, and C.~De~Mol, ``An iterative thresholding algorithm
  for linear inverse problems with a sparsity constraint,''
  \emph{Communications on pure and applied mathematics}, vol.~57, no.~11, pp.
  1413--1457, 2004.

\bibitem{kamilov2016learning}
U.~S. Kamilov and H.~Mansour, ``Learning optimal nonlinearities for iterative
  thresholding algorithms,'' \emph{IEEE Signal Processing Letters}, vol.~23,
  no.~5, pp. 747--751, 2016.

\bibitem{schmidt2014shrinkage}
U.~Schmidt and S.~Roth, ``Shrinkage fields for effective image restoration,''
  in \emph{Proceedings of the IEEE Conference on Computer Vision and Pattern
  Recognition}, 2014, pp. 2774--2781.

\bibitem{pruessmann1999sense}
K.~P. Pruessmann, M.~Weiger, M.~B. Scheidegger, P.~Boesiger \emph{et~al.},
  ``{SENSE: sensitivity encoding for fast MRI},'' \emph{Magn. Reson. Med.},
  vol.~42, no.~5, pp. 952--962, 1999.

\bibitem{kak_principles_2001}
A.~Kak and M.~Slaney, \emph{Principles of {Computerized} {Tomographic}
  {Imaging}}.\hskip 1em plus 0.5em minus 0.4em\relax Society for Industrial and
  Applied Mathematics, Jan. 2001.

\bibitem{horbelt_discretization_2002}
S.~Horbelt, M.~Liebling, and M.~Unser, ``Discretization of the radon transform
  and of its inverse by spline convolutions,'' \emph{IEEE Transactions on
  Medical Imaging}, vol.~21, no.~4, pp. 363--376, Apr. 2002.

\bibitem{da2006nonsubsampled}
A.~L. Da~Cunha, J.~Zhou, and M.~N. Do, ``The nonsubsampled contourlet
  transform: theory, design, and applications,'' \emph{IEEE Trans. Image
  Process.}, vol.~15, no.~10, pp. 3089--3101, 2006.

\bibitem{mallat1999wavelet}
S.~Mallat, \emph{A wavelet tour of signal processing}.\hskip 1em plus 0.5em
  minus 0.4em\relax Academic press, 1999.

\bibitem{boyd2011distributed}
S.~Boyd, N.~Parikh, E.~Chu, B.~Peleato, and J.~Eckstein, ``Distributed
  optimization and statistical learning via the alternating direction method of
  multipliers,'' \emph{Found. Trends in Mach. Learn.}, vol.~3, no.~1, pp.
  1--122, 2011.

\bibitem{beck2009fast}
A.~Beck and M.~Teboulle, ``A fast iterative shrinkage-thresholding algorithm
  for linear inverse problems,'' \emph{SIAM journal on imaging sciences},
  vol.~2, no.~1, pp. 183--202, 2009.

\bibitem{afonso2010fast}
M.~V. Afonso, J.~M. Bioucas-Dias, and M.~A. Figueiredo, ``Fast image recovery
  using variable splitting and constrained optimization,'' \emph{IEEE
  Transactions on Image Processing}, vol.~19, no.~9, pp. 2345--2356, 2010.

\bibitem{zeiler2014visualizing}
M.~D. Zeiler and R.~Fergus, ``Visualizing and understanding convolutional
  networks,'' in \emph{European Conference on Computer Vision}.\hskip 1em plus
  0.5em minus 0.4em\relax Springer, 2014, pp. 818--833.

\bibitem{he2015deep}
K.~He, X.~Zhang, S.~Ren, and J.~Sun, ``Deep residual learning for image
  recognition,'' \emph{arXiv preprint arXiv:1512.03385}, 2015.

\bibitem{vedaldi15matconvnet}
A.~Vedaldi and K.~Lenc, ``{MatConvNet -- Convolutional Neural Networks for
  MATLAB},'' in \emph{Proceeding of the {ACM} Int. Conf. on Multimedia}, 2015.

\bibitem{pascanu2013difficulty}
R.~Pascanu, T.~Mikolov, and Y.~Bengio, ``On the difficulty of training
  recurrent neural networks.'' \emph{ICML (3)}, vol.~28, pp. 1310--1318, 2013.

\bibitem{unser2016representer}
M.~Unser, J.~Fageot, and H.~Gupta, ``Representer theorems for
  sparsity-promoting $\ell_{1}$ regularization,'' \emph{{IEEE} Transactions on
  Information Theory}, vol.~62, no.~9, pp. 5167--5180, September 2016.

\end{thebibliography}
% Generated by IEEEtran.bst, version: 1.13 (2008/09/30)

\end{document}